%% file: main.tex
\documentclass{article}

\usepackage[preprint]{neurips_2026}

\usepackage[utf8]{inputenc} 
\usepackage[T1]{fontenc}    
\usepackage{hyperref}       
\usepackage{url}            
\usepackage{booktabs}       
\usepackage{amsfonts}       
\usepackage{nicefrac}       
\usepackage{microtype}      
\usepackage{xcolor}         
\usepackage{graphicx}
\usepackage{amsmath}
\usepackage{amsthm}
\usepackage[table]{xcolor}

\usepackage[ruled,vlined]{algorithm2e}
\usepackage{bm}
\usepackage{multirow}

\newtheorem{proposition}{Proposition}

\definecolor{Better}{rgb}{0.18, 0.407, 0.266}
\definecolor{Worse}{rgb}{0.35, 0.35, 0.35}

\newcommand{\ours}{\textbf{PRiSM}}
\newcommand{\imp}[1]{$_{{\textbf{\textcolor{Better}{#1}}}}$}
\newcommand{\wor}[1]
{$_{{\textbf{\textcolor{Worse}{#1}}}}$}

\title{PRiSM: Prototype Regularization for Few-Shot VLMs}

\author{%
  Ghassen Baklouti\thanks{ghassen.baklouti.1@ens.etsmtl.ca
} \\
  ÉTS Montreal \\
  \And 
  Omprakash Chakraborty 
  \\ ÉTS Montreal
  \And 
  Jose Dolz 
  \\ ÉTS Montreal
  \And 
  Ismail Ben Ayed 
  \\ ÉTS Montreal
}

\begin{document}

\maketitle

\begin{abstract}
Training-free few-shot adaptation methods have gained significant attention recently in the context of Vision-language Models (VLMs). Yet, current benchmarks rely on strong assumptions about the statistics of the adaptation data, e.g., class balance. We question these simplifying assumptions and introduce a more realistic benchmark that varies both the levels of class balance and the effective number of classes in few-shot tasks via Dirichlet sampling. Surprisingly, under our setting, we observe substantial drops in the performances of state-of-the-art methods, more so when the number of labeled samples increases. To mitigate this, we introduce PRiSM, a class-prototype regularization that can be deployed as a plug and play module on top of any existing baseline method, significantly improving performances. Our method optimizes a novel multi-term loss, which includes a regularizer maximizing inter-class pairwise distances, along with additional terms promoting support-feature alignment and fidelity to the baseline prototypes. Furthermore, we introduce an effective and computationally efficient block Majorize-Minimize optimizer for our objective. More specifically, we derive a valid blockwise Lipschitz constant  (i.e., a bound on the Hessian’s spectral norm), which can be computed efficiently via the Gershgorin circle theorem.
Extensive experiments show that PRiSM improves several training-free baselines, with large gains when dealing with severe class imbalance and high numbers of classes.

\end{abstract}

\input{1_Introduction}
\input{6_Benchmark_Description}
\input{3_Methodology}

\input{4_Experiments}
\input{5_Conclusion}

\newpage
{
    \small
    \bibliographystyle{plain}
    \bibliography{main}
}


\newpage
\appendix

\input{X_suppl}



\end{document}

%% file: 1_Introduction.tex
\section{Introduction}


Vision–language models (VLMs)~\cite{radford2021clip, jia2021align, xu2023metaclip} have emerged as powerful general-purpose learners, enabling zero-shot recognition by aligning visual and textual representations in a shared embedding space. A common paradigm for downstream adaptation is few-shot learning, where a small set of labeled samples, referred to as the support set, is used to refine class prototypes, which are then evaluated on unseen query samples.
Existing few-shot adaptation methods for VLMs can be broadly grouped into prompt-learning, adapter-based, and training-free approaches. Prompt-learning~\cite{ zhou2022cocoop, zhou2022coop} methods optimize learnable input tokens to obtain task-specific textual prototypes. In contrast, adapter-based methods~\cite{gao2024clip, yu2023task} learn a small set of trainable parameters to adjust the feature space, classifier weights, or prototype geometry while preserving the pretrained knowledge encoded in the model. Training-free approaches~\cite{bendou2025proker, wang2024baseline, zhang2022tipadapter, zhu2023not} further improve efficiency by exploiting the support set without gradient updates, offering stable alternatives to training-based adaptation. Representative training-free methods include TIP-Adapter ~\cite{zhang2022tipadapter}, which builds a cache model from support features; APE ~\cite{zhu2023not}, which incorporates a feature-selection step into the cache model; GDA ~\cite{wang2024baseline}, which models each class as a Gaussian distribution with shared covariance in the embedding space; and ProKeR ~\cite{bendou2025proker}, which performs kernel-based prototype refinement.

Despite their strong empirical performance, these methods remain limited by two important factors. First, many training-free and adapter-based methods rely on sensitive task-specific hyperparameters that control the balance between the pretrained zero-shot classifier and the task-specific information provided by the support set ~\cite{huang2024lp, clap24}. This trade-off is crucial: overemphasizing the support set can lead to overfitting and prototype drift, while relying too strongly on the zero-shot classifier can prevent meaningful task adaptation. Consequently, competitive performance often requires careful hyperparameter selection, typically through grid search on validation data, while hyperparameters tuned for one task rarely transfer reliably to another, requiring repeated calibration for each new dataset, shot number or adaptation scenario. This model-selection procedure is often unrealistic in genuine few-shot settings. A separate validation set may not be available, or is as expensive to annotate as the support set itself. In fact, assuming access to labeled validation sets may invalidate the few-shot assumption on which the methods are built. Indeed, in some cases, hyperparameters are selected using large validation or test sets \cite{lu2022prompt,zhang2022tipadapter,lin2023multimodality}, which implicitly assumes access to far more labeled data than the few-shot setting enables. Thus, when competitive performance depends on task-specific tuning, the reported results reflect not only the adaptation method itself, but also an additional model-selection protocol that departs from the few-shot assumption. Second, the standard few-shot evaluation protocol~\cite{bendou2025proker, zanella2024boosting,  zhou2022cocoop} relies on a critical yet often overlooked assumption: \emph{class-balanced tasks}. In particular, both the support and query sets are constructed such
that the classes are equally represented, implicitly enforcing a uniform prior. While this protocol simplifies adaptation, it poorly reflects real-world scenarios, where data distributions are naturally skewed, certain classes dominate others, and the set of observed classes during adaptation may be incomplete.

This discrepancy has important consequences. Under class-balanced sampling, the support set provides reliable and unbiased estimates of class prototypes, enabling effective adaptation even with very few samples. However, once this assumption is relaxed, the support distribution becomes highly uneven and prototype estimates are disproportionately influenced by the majority classes. As a result, the underlying representations become biased, leading to distorted decision boundaries and increased confusion among classes. Importantly, this issue is not necessarily mitigated by simply increasing the number of shots. On the contrary, as we observe in Fig.~\ref{fig:teaser} (left), adding more samples under imbalanced distributions can further amplify the majority-class bias, causing several strong 
few-shot methods to degrade significantly. This behavior suggests that the core limitation of existing 
few-shot adaptation methods lies not in the adaptation mechanism itself, but in the \emph{quality of the class prototypes} used for prediction. The increase in additional data with the shots may amplify the biases in prototype estimation rather than refine the decision boundaries. 

To address these issues, we propose \ours, a validation-free \emph{prototype regularization} framework that explicitly corrects biased and poorly separated class prototypes under realistic few-shot conditions. \ours{} is designed to mitigate the dominance of majority classes while preserving the discriminative structure of the VLM embedding space. 

As illustrated in Fig.~\ref{fig:teaser} (right), the impact of this regularization is very significant across both near balanced and severe imbalance settings. A key trend emerges from our results: baseline methods that substantially underperform under imbalance can reach performance levels comparable to stronger state-of-the-art approaches after prototype rectification. For instance, under severe imbalance, a relatively weaker method such as Tip-Adapter closes much of the gap to the recent ProKeR once its prototypes are regularized. This suggests that the primary bottleneck is not necessarily the adaptation mechanism itself, but the quality of the underlying prototypes. More broadly, this finding indicates that prototype bias induced by imbalance acts as a common failure mode across methods, and rectifying it can systematically elevate the performance of diverse approaches. 
Concretely, \ours{} regularizes the classifier prototypes by optimizing a novel loss that combines support-feature alignment, fidelity to the baseline prototypes and pairwise inter-class separation.
We optimize our loss with an efficient block majorize–minimize algorithm, alternating between class prototype updates and a lightweight parameter-efficient finetuning of the vision encoder. 

\input{plots/teaser}

We summarize our contributions as follows:
\begin{itemize}
    \item We expose a critical limitation of standard few-shot protocols for VLMs: class-balanced support and query sets may hide severe prototype bias. We introduce a controlled benchmark that varies both the effective number of classes and the level of class imbalance through Dirichlet sampling.

    \item We show that, under our realistic setting, increasing the number of shots degrades training-free adaptation as additional samples may amplify the majority-class bias.

    \item 
    We introduce PRiSM, a class-prototype regularization that can be deployed as a plug-and-play module on top of any existing baseline method, significantly improving performances. Our method optimizes a novel multi-term loss, which includes a regularizer maximizing inter-class pairwise distances, along with additional terms promoting support-feature alignment and fidelity to the baseline prototypes.

    \item 
    We introduce an effective and computationally efficient block Majorize-Minimize optimizer for our loss. More precisely, we derive a valid blockwise Lipschitz constant  (i.e., a bound on the Hessian’s spectral norm), which can be evaluated effortlessly  via the Gershgorin circle theorem.

    \item We report extensive experiments showing that \ours{} can improve the performance of few-shot baselines, including state-of-the-art methods, with large gains when dealing with severe class imbalance and high numbers of classes.
    \end{itemize}

%% file: plots/teaser.tex
\begin{figure*}[!htbp]
\centering

\begin{minipage}{0.37\textwidth}
    \centering
    \includegraphics[width=\linewidth]{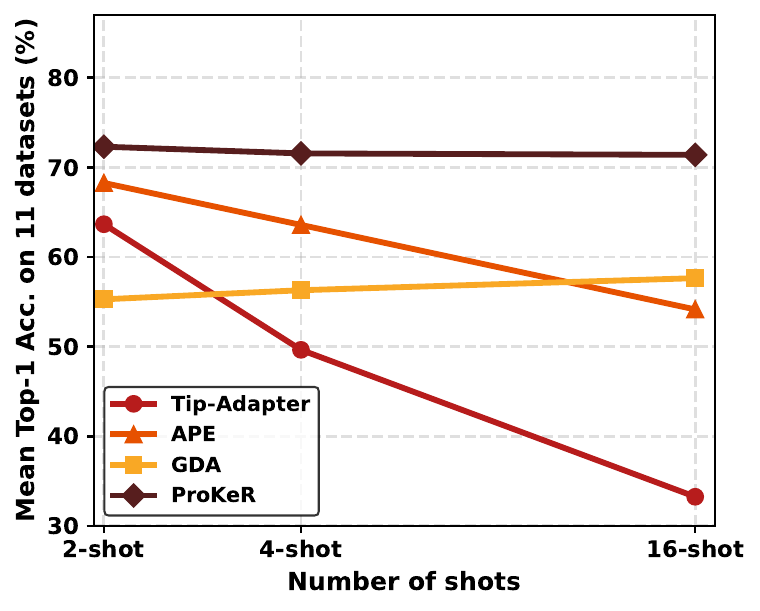}
\end{minipage}
\begin{minipage}{0.58\textwidth}
    \centering
    \includegraphics[width=\linewidth]{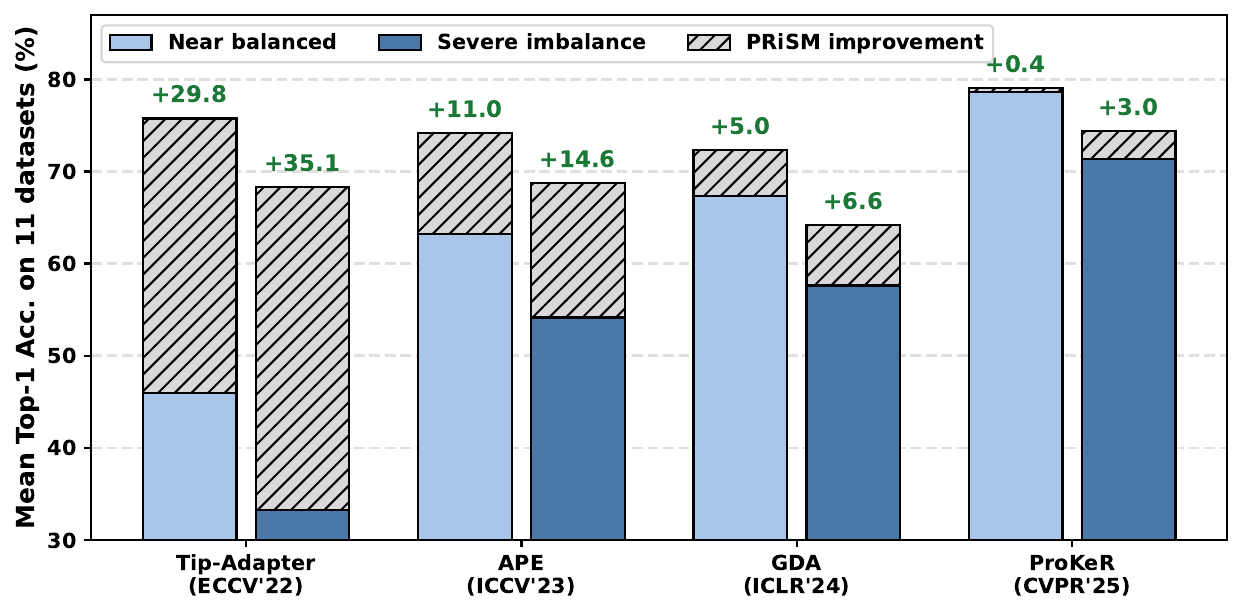}
\end{minipage}

\caption{
\textbf{Few-shot methods under realistic imbalance and effectiveness of prototype rectification.}
\textbf{(Left)} Variation of mean Top-1 accuracy across 10 datasets with the number of shots under severe imbalance. Contrary to standard few-shot assumptions, increasing the number of support samples does not necessarily improve performance; instead, several strong baselines degrade as additional samples amplify majority-class bias in prototype estimation.
\textbf{(Right)} Performance at 16-shot under near balanced and severe imbalance settings. Our proposed prototype rectification (\ours) consistently improves all baselines, with particularly large gains under severe imbalance.
}
\label{fig:teaser}
\end{figure*}

%% file: 6_Benchmark_Description.tex
\section{Toward Realistic Few-Shot Adaptation}
\label{Sec:Benchmark_description}

Let us define a few-shot adaptation task from a downstream dataset with label space
$\mathcal{Y}=\{1,2,\ldots,C\}$. Each task consists of a labeled support set
$\mathcal{D}_{\mathrm{adapt}}=\{(\boldsymbol{x}_n,\boldsymbol{y}_n)\}_{n=1}^{N}$,
where $\boldsymbol{x}_n$ denotes an input image and $\boldsymbol{y}_n$ is its corresponding
one-hot label vector over $\mathcal{Y}$. In the standard few-shot setting,
$\mathcal{D}_{\mathrm{adapt}}$ is constructed as a balanced support set with $K$
labeled examples per class, where $K\in\{1,2,4,8,16\}$. Hence, the total number of
adaptation samples is $N=KC$.

This balanced construction relies on several restrictive assumptions: all classes of
interest are observed during adaptation, each class is represented by the same number
of labeled examples, and the support and query sets follow the same class-frequency
distribution.
These assumptions are often violated in practical few-shot adaptation
scenarios, where the available labeled data may cover only a subset of the target space and may exhibit class proportions that differ from those encountered at evaluation.

To capture these constraints, we consider realistic few-shot adaptation tasks in which
the support set covers only an effective subset of classes
$\mathcal{Y}_{\mathrm{eff}}\subseteq \mathcal{Y}$, with
$C_{\mathrm{eff}}=|\mathcal{Y}_{\mathrm{eff}}|$, and where the support and query
sets may follow different class-frequency distributions. We define two class-coverage
regimes according to the proportion of classes included in
$\mathcal{Y}_{\mathrm{eff}}$: \emph{low} coverage, where $10\%$--$30\%$ of the classes are
available during adaptation, and \emph{high} coverage, where $70\%$--$90\%$ are available.

To model both class imbalance and distributional mismatch between adaptation and
evaluation, we independently sample class proportions for the support and query
sets from Dirichlet distributions over $\mathcal{Y}_{\mathrm{eff}}$. Let
$\boldsymbol{\pi}_{S}\sim\mathrm{Dir}(\delta_{S})$ and
$\boldsymbol{\pi}_{Q}\sim\mathrm{Dir}(\delta_{Q})$ denote the class-proportion vectors
used to construct $\mathcal{D}_{\mathrm{adapt}}$ and
$\mathcal{D}_{\mathrm{query}}$, respectively. Since these proportions are sampled
independently, the class frequencies observed during adaptation may differ from
those encountered at evaluation.

The concentration parameter $\delta\in[0,1]$ controls the dispersion of the sampled
class proportions. Smaller values of $\delta$ produce highly skewed distributions
dominated by a few classes, whereas larger values yield distributions closer to
uniform sampling (see Appendix~\ref{app:dirichlet}). We consider two imbalance regimes: severe imbalance, with
$\delta\in[0.1,0.3]$, and near balanced sampling, with $\delta\approx0.9$. For each task,
we fix the support set size to $N=K\times C_{\mathrm{eff}}$ and the query set size
to $N_{\mathrm{query}}=16\times C_{\mathrm{eff}}$ ~\cite{boudiaf2020information}, and then allocate samples across
classes according to $\boldsymbol{\pi}_{S}$ and $\boldsymbol{\pi}_{Q}$. The resulting
class-wise sample counts can vary substantially across both sets and some classes may even be absent in a query set for a given task. However, to ensure that the task remains a valid
few-shot adaptation problem, we enforce that every class in
$\mathcal{Y}_{\mathrm{eff}}$ appears at least once in
$\mathcal{D}_{\mathrm{adapt}}$.

Combining the two class-coverage regimes with the two imbalance regimes yields \textbf{four}
realistic evaluation settings: \emph{low} coverage with \emph{severe} imbalance, \emph{low} coverage with
\emph{near balanced} sampling, \emph{high} coverage with \emph{severe} imbalance, and \emph{high} coverage with
\emph{near balanced} sampling.

%% file: 3_Methodology.tex
\section{Formulation of \ours{}}
\label{Sec:Methods}
\subsection{Our Method}
Consider a pretrained vision-language model, such as CLIP, composed of an image encoder $f_{\boldsymbol{\phi}}(.)$ and a text encoder $f_{\boldsymbol{\theta}}(.)$, both mapping visual and textual inputs into a shared $d$-dimensional embedding space. Given a downstream task of image classification into $C$ classes, zero-shot prediction amounts to matching image embeddings to text-based class prototypes. For each class $c$, let $\{\boldsymbol{t}_{m,c}\}_{m=1}^M$ denote a set of $M$ textual descriptions (or prompts) of the class, e.g., ``A photo of [class name]''. The corresponding zero-shot prototype is $\boldsymbol{u}_c = \frac{1}{M} \sum_{m} f_{\boldsymbol{\theta}}(\boldsymbol{t}_{m,c}) \in \mathbb{R}^{d}$. Given a query image $\boldsymbol{x}$, with visual embedding $\boldsymbol{v}=f_{\boldsymbol{\phi}}(\boldsymbol{x}) \in \mathbb{R}^{d}$, the zero-shot posterior probability prediction of class $c$ is: 
\begin{equation}
    \mbox{Pr}(c|\boldsymbol{x}) = \frac{\exp(\langle \boldsymbol{v}, \boldsymbol{u}_c \rangle / \tau)}{\sum_{i=1}^{C} \exp(\langle \boldsymbol{v}, \boldsymbol{u}_i \rangle/ \tau)}
    \label{Eq:zero_shot_predictions}
\end{equation}
where pairwise operator $\langle \cdot, \cdot \rangle$ denotes the cosine similarity and $\tau$ is a temperature scaling parameter.

In general, few-shot adaptation methods update the visual features, the class prototypes or both using the support set (i.e., a set of few labeled samples), while discouraging substantial deviations from zero-shot class prototypes 
$\{\boldsymbol{u}_c\}_{c=1}^C$; see, for instance, \cite{clap24,zhang2022tipadapter,bendou2025proker}.  
We write such adaptation mechanisms in a generic form as
$(\mathbf{a}, \boldsymbol{v'}) = f_{\boldsymbol{\psi}}(\mathbf{u}, \boldsymbol{v})$, 
where $\mathbf{u} =[\boldsymbol{u}_1,...,\boldsymbol{u}_{C}] \in \mathbb{R}^{d  C}$ is a vector concatenating all zero-shot prototypes, $\boldsymbol{v'}$ is the adapted visual embedding, and $\boldsymbol{\psi}$ denotes the learnable or method-specific adaptation parameters. We denote by $\mathbf{a} = [\boldsymbol{a}_1, ..., \boldsymbol{a}_{C}] \in \mathbb{R}^{d C}$ a vector concatenating the class prototypes obtained by a given baseline method. Thus, for each class $c$, $\boldsymbol{a}_c$ could be a task-adapted prototype obtained from any given few-shot adaptation method, or just the zero-shot prototype if no adaptation is used. 

As discussed earlier, realistic few-shot settings often exhibit severe class imbalance, causing the support set to provide uneven evidence across the classes. Therefore, some initial prototypes might be less reliable than others: the minority-class prototypes might be weakly aligned with their visual samples, while the majority classes may dominate the classifier decisions, particularly when the support and query (test) distributions differ. To address this issue, our goal is to rectify a classifier $\mathbf{a}$ of a given baseline method by learning regularized prototypes $\mathbf{w} = [\boldsymbol{w}_1,..., \boldsymbol{w}_C] \in \mathbb{R}^{dC}$. For each class $c$, let $\boldsymbol{\mu}_c (\boldsymbol{\phi}) = \frac{1}{n_c} \sum_{n} y_{n,c} f_\mathbf{\phi}(\boldsymbol{x}_n)$ denote the mean of vision features within the class, where $n_c$ is the number of class' samples and $y_{n,c} \in \{0, 1\}$ is a binary-indicator function for the support-set samples: $y_{n,c}=1$ when $c$ is the class of the i$^{\mbox{\small th}}$ support sample and $y_{n,c}=0$ otherwise. We propose to minimize the following objective to learn rectified prototypes:  
\begin{equation}
    \mathcal{L}(\mathbf{w}, \boldsymbol{\phi}) = \beta \sum_{c} 
   \|\boldsymbol{w}_c  - \boldsymbol{\mu}_c(\boldsymbol{\phi})\|_2^2  + \gamma \sum_c \|\boldsymbol{w}_c  - \boldsymbol{a}_c\|_2^2 - \frac{\lambda}{2(C-1)}\sum_{c\#c'} \|\boldsymbol{w}_c - \boldsymbol{w}_{c'}\|_2^2 
    \label{Eq: overall loss}
\end{equation}
where $\|.\|_2$ denotes the standard $l_2$ norm. The first term in 
Eq.~\eqref{Eq: overall loss} encourages alignment between each rectified prototype and the corresponding class' visual mean, promoting consistency with the support-set distribution. The second anchors our prototype rectification to the baseline-method classifier, discouraging $\boldsymbol{w}_c$ from deviating substantially from $\boldsymbol{a}_c$. The last term is the pairwise regularizer driving our prototype rectification: It encourages inter-class separation by pushing away each pair of class prototypes, reducing their interference under our realistic settings. Note that this regularizer is concave with respect to the class prototypes, which makes the overall objective non-convex\footnote{In fact, the convexity of the overall objective in Eq.~\eqref{Eq: overall loss} is controlled by the choice of non-negative weighting factors $\beta$, $\gamma$ and $\lambda$. However, choosing these factors such that the objective is strictly convex does not yield the desired prototype rectifications as it makes the $l_2$-regularization components on $\boldsymbol{w}_c$ dominate the overall objective function.} with respect to $\mathbf{w}$. Non-negative scalars $\beta$, $\gamma$, and $\lambda$ control the relative strength of each term. 

\subsection{Block Majorize-Minimize optimization}

Majorize-Minimize (MM) \cite{lange2000optimization} is a general optimization paradigm that includes standard optimizers such as gradient descent, expectation-maximization and concave-convex procedures as special cases. At each iteration, MM minimizes a majorizing function -- an upper bound on the original objective ${\cal L}(\mathbf{w})$ -- which is tight at the current iteration $j$: 
${\cal L}(\mathbf{w}) \leq {\cal M}(\mathbf{w},\mathbf{w}^j)$ and ${\cal L}(\mathbf{w}^j)= {\cal M}(\mathbf{w}^j, \mathbf{w}^j)$. Hence, iterative step $\mathbf{w}^{j+1} = \min_{\mathbf{w}} {\cal M}(\mathbf{w}, \mathbf{w}^j)$ guarantees that the original objective does not increase: ${\cal L}(\mathbf{w}^{j+1}) \leq {\cal M}(\mathbf{w}^{j+1},\mathbf{w}^j) \leq {\cal M}(\mathbf{w}^{j},\mathbf{w}^j) = {\cal L}(\mathbf{w}^{j})$. 
Hence, in the MM framework, step sizes are implicit, unlike standard gradient-descent updates, in which the learning rates are heavily searched using validation data and running the optimizer multiple times. In the following, we take advantage of the blockwise Lipschitz-gradient continuity of our objective in Eq.~\eqref{Eq: overall loss} with respect to the block of prototype variables $\mathbf{w}$. More precisely, we derive a bound on the spectral norm of the blockwise Hessian with respect to $\mathbf{w}$ (Prop.~\ref{Lipschitz-constant}), which yields a block majorizing function with implicit step size. This relaxes the need for computationally intensive validation searches for the step size, while yielding results on par with the best learning rates obtained with validation data; See Fig. \ref{fig:lr} in the experiments. Also, interestingly, our Lipschitz-driven step size is substantially larger than those commonly used in deep learning, yielding steeper descent towards a local minimum. 

Our objective in Eq. \ref{Eq: overall loss} involves two blocks of variables: the corrected prototypes $\mathbf{w}$ and the vision encoder parameters $\boldsymbol{\phi}$. We optimize these variables using a block Majorize-Minimize (MM) procedure, which alternates two MM sub-steps: prototype rectification and feature adaptation. At each iteration, one block is updated while the other is kept fixed. This yields two simpler subproblems: a quadratic problem in $\mathbf{w}$ and a lightweight encoder-adaptation step in $\boldsymbol{\phi}$, with standard gradient steps\footnote{Standard gradient steps could be also viewed as an instance of MM updates.}. The complete procedure is summarized in Algorithm 1 (refer Appendix ~\ref{algo}).

\subsubsection{Optimization with respect to the prototypes}

When vision-encoder variables $\boldsymbol{\phi}$ are fixed, our objective in Eq.~\eqref{Eq: overall loss} reduces to a quadratic function of $\mathbf{w}$:
\begin{equation}
    \mathcal{L}_w(\mathbf{w}) \stackrel{c}{=} \sum_{c,c'} \mathbf{R}_{cc'} \boldsymbol{w}_{c'}^\top \boldsymbol{w}_c - 2 \sum_c \mathbf{b}_c^\top \boldsymbol{w}_c 
    = \mathbf{w}^\top \mathbf{\Phi}\mathbf{w} -2 \mathbf{B}^\top\mathbf{w}     \label{Eq: quadratic form of our loss}
\end{equation}
where symbol $\stackrel{c}{=}$ means equality up to a constant, 
$\mathbf{b}_c = \beta \boldsymbol{\mu}_c (\boldsymbol{\phi}) + \gamma \boldsymbol{a}_c \in \mathbb{R}^{d}$, and $\mathbf{R} \in \mathbb{R}^{C \times C}$ is a prototype-interaction matrix given by $\mathbf{R}_{cc'} = \frac{\lambda}{C-1}$ for $c \neq c'$ and $\mathbf{R}_{cc} = \beta + \gamma - \lambda$. 
Also, matrix $\mathbf{B} = [\mathbf{b}_1,...,\mathbf{b}_C] \in \mathbb{R}^{dC \times dC}$, and matrix $\mathbf{\Phi} = \mathbf{R} \otimes \mathbf{I}_d$, where $\otimes$ denotes the Kronecker product and $\mathbf{I}_d$ is the $d \times d$ identity matrix. 

As $\mathcal{L}_w(\mathbf{w})$ in Eq.~\eqref{Eq: quadratic form of our loss} is quadratic, twice-differentiable function, it has a Lipschitz continuous gradient, i.e., there exists a 
strictly positive Lipschitz constant $\rho$ such that the spectral norm (the largest singular value) of the Hessian is bounded by $\rho$: $\|\nabla^2 \mathcal{L}_w(\mathbf{w})\|_2 \leq \rho ~ \forall \mathbf{w}$, with $\|.\|_2$ denoting the spectral norm for a matrix.   
From this inequality, we obtain a majorizing function for $\mathcal{L}_w$ at iteration $j$:
\begin{equation}
\mathcal{L}_w (\mathbf{w}) \leq \mathcal{M}_{w} (\mathbf{w},\mathbf{w}^j) = \mathcal{L}_w(\mathbf{w}^j)+
    \nabla\mathcal{L}_{w}(\mathbf{w}^j)^\top (\mathbf{w}-\mathbf{w}^j) + \frac{\rho}{2}
    \|\mathbf{w}-\mathbf{w}^j\|_2^2 .
    \label{Eq: prototypes_majorizer}
\end{equation}
Minimizing the majorizing function $\mathcal{M}_{w}$ in Eq.~\eqref{Eq: prototypes_majorizer} yields gradient-type updates with implicit step sizes $1/\rho$, which guarantee that the objective decreases by at least $\frac{1}{2\rho} \|\nabla \mathcal{L}(\mathbf{w}^j)\|^2$ at each iteration, a result known as 
the Descent Lemma. One valid Lipschitz constant would be the maximum singular value of the Hessian of Eq.~\eqref{Eq: quadratic form of our loss}, i.e. twice the spectral norm of $\mathbf{\Phi}$: $2\|\mathbf{\Phi}\|_2$. However, a spectral decomposition of $\mathbf{\Phi}$, a $dC \times dC$ matrix, could be computationally intensive for large numbers of classes. In the following proposition, we derive an explicit upper bound on $2\|\mathbf{\Phi}\|_2$, which provides a valid Lipschitz constant. 

\begin{proposition}
\label{Lipschitz-constant}
Objective $\mathcal{L}_w$ in Eq.~\eqref{Eq: quadratic form of our loss} 
has $\rho$-Lipschitz gradient, with $\rho$ given by:
\begin{equation}
    \rho = 2 (|\beta + \gamma - \lambda| + \lambda) \geq  2 \|\mathbf{\Phi}\|_2    \label{Eq: Lp constant prototypes}
\end{equation}
\label{p1}
\end{proposition}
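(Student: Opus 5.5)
The plan is to compute the Hessian of $\mathcal{L}_w$ explicitly and bound its spectral norm with the Gershgorin circle theorem. The norm bound then transfers from $\mathbf{R}$ to $\mathbf{\Phi}$ through the Kronecker structure.

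\textbf{Hessian and Lipschitz reduction.} Since $\mathcal{L}_w(\mathbf{w}) = \mathbf{w}^\top \mathbf{\Phi}\mathbf{w} - 2\mathbf{B}^\top \mathbf{w}$ and $\mathbf{\Phi} = \mathbf{R}\otimes \mathbf{I}_d$ is symmetric ($\mathbf{R}$ is symmetric by construction), the Hessian is constant:
\begin{equation*}
\nabla^2 \mathcal{L}_w(\mathbf{w}) = 2\mathbf{\Phi}.
\end{equation*}
The gradient $\nabla\mathcal{L}_w(\mathbf{w}) = 2\mathbf{\Phi}\mathbf{w} - 2\mathbf{B}$ therefore satisfies $\|\nabla\mathcal{L}_w(\mathbf{w}) - \nabla\mathcal{L}_w(\mathbf{w}')\|_2 \le 2\|\mathbf{\Phi}\|_2\,\|\mathbf{w}-\mathbf{w}'\|_2$. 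It thus suffices to prove $2\|\mathbf{\Phi}\|_2 \le 2(|\beta+\gamma-\lambda|+\lambda)$, and then any $\rho$ at least this large is a valid Lipschitz constant.

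\textbf{Kronecker reduction.} The singular values of a Kronecker product are the pairwise products of the factors' singular values. Since $\mathbf{I}_d$ has all singular values equal to $1$, we get $\|\mathbf{R}\otimes\mathbf{I}_d\|_2 = \|\mathbf{R}\|_2\|\mathbf{I}_d\|_2 = \|\mathbf{R}\|_2$. Alternatively, after a permutation $\mathbf{\Phi}$ is block diagonal with $d$ copies of $\mathbf{R}$. Either way, the problem reduces to the $C\times C$ matrix $\mathbf{R}$.

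\textbf{Gershgorin bound.} Because $\mathbf{R}$ is symmetric, $\|\mathbf{R}\|_2 = \max_i |\lambda_i(\mathbf{R})|$. By the Gershgorin circle theorem, every eigenvalue lies in some disc centered at $\mathbf{R}_{cc} = \beta+\gamma-\lambda$ with radius
\begin{equation*}
\sum_{c'\neq c} |\mathbf{R}_{cc'}| = (C-1)\cdot\frac{\lambda}{C-1} = \lambda,
\end{equation*}
using $\lambda \ge 0$. Hence every eigenvalue satisfies $|\lambda_i| \le |\beta+\gamma-\lambda| + \lambda$ by the triangle inequality. Multiplying by $2$ gives the claim.

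The only point needing care is the passage from eigenvalues to the spectral norm, which relies on the symmetry of $\mathbf{R}$; the Kronecker step is where a careless argument could slip. As a sanity check, the exact eigenvalues of $\mathbf{R}$ are available because it has the form $\alpha\mathbf{I}+\frac{\lambda}{C-1}\mathbf{1}\mathbf{1}^\top$ with $\alpha = \beta+\gamma-\lambda-\frac{\lambda}{C-1}$. They are $\beta+\gamma$ (eigenvector $\mathbf{1}$) and $\beta+\gamma-\lambda-\frac{\lambda}{C-1}$ (multiplicity $C-1$). Both are indeed bounded in absolute value by $|\beta+\gamma-\lambda|+\lambda$, which confirms the bound.
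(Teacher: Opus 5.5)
Your proposal is correct and follows essentially the same route as the paper's proof: the Hessian is the constant $2\mathbf{\Phi}$, the Kronecker identity gives $\|\mathbf{R}\otimes\mathbf{I}_d\|_2=\|\mathbf{R}\|_2$, and Gershgorin discs centered at $\beta+\gamma-\lambda$ with radius $\lambda$, together with the symmetry of $\mathbf{R}$, bound the spectral norm. Your closing computation of the exact spectrum ($\beta+\gamma$ and $\beta+\gamma-\lambda-\frac{\lambda}{C-1}$) is also correct and goes slightly beyond the paper: it shows that $\rho = 2\|\mathbf{\Phi}\|_2$ exactly whenever $\beta+\gamma\ge\lambda$, which holds for the paper's chosen hyperparameters.
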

\begin{proof}
The proof follows from the fact that $\|\mathbf{\Phi}\|_2 = \| \mathbf{R}\otimes \mathbf{I}_d \|_2 = \| \mathbf{R} \|_2  \| \mathbf{I}_d \|_2 = \| \mathbf{R} \|_2$, and from bounding $\| \mathbf{R} \|_2$ using the Gershgorin’s circle theorem. The details are deferred to the appendix.
\end{proof}

\subsubsection{Optimization with respect to the vision encoder parameters}

For the vision encoder block, with $\mathbf{w}$ fixed, the objective depends only on $\boldsymbol{\phi}$  through the visual mean $\boldsymbol{\mu}(\boldsymbol{\phi})$. We optimize this block by standard back-propagation using a conventional learning rate $\boldsymbol{\eta}_{\boldsymbol{\phi}}$, while restricting the trainable parameters to a parameter-efficient subset of the image encoder. Specifically, we use LoRA \cite{hu2022lora} to introduce a small set of trainable low-rank updates in selected layers of $f_{\boldsymbol{\phi}}(.)$. For a frozen pretrained weight matrix $\mathbf{T}_0$, LoRA parameterizes the adapted weight as $\mathbf{T}=\mathbf{T}_0+\mathbf{PL}$ where $\mathbf{P}\in\mathbb{R}^{l1_\times r}$ and $\mathbf{L}\in\mathbb{R}^{r\times l2}$ are trainable low-rank factors with $r\ll min(l1, l2)$. Only $\mathbf{P}$ and $\mathbf{L}$ are optimized, while $\mathbf{T}_0$ remains fixed. This parameter-efficient design reduces the number of trainable parameters while maintaining the expressive power of the pretrained vision encoder.

%% file: 4_Experiments.tex
\section{Experiments}
\label{experiments}
\noindent\textbf{Datasets.}
Following standard practices in vision-language models~\cite{radford2021clip,zhou2022coop,zhou2022cocoop,zanella2024boosting}, we consider $10$ widely used image classification benchmarks covering diverse domains and levels of granularity. These include fine-grained recognition datasets such as Oxford-Pets~\cite{parkhi2012pets}, Stanford Cars~\cite{krause2013stanfordcars}, and FGVC Aircraft~\cite{maji2013fgvcaircraft}; texture and material datasets including Describable Textures Dataset (DTD)~\cite{cimpoi2014dtd} and Food-101~\cite{bossard2014food101}; scene and remote sensing benchmarks such as SUN397~\cite{xiao2010sun397} and EuroSAT~\cite{helber2019eurosat}; as well as generic object recognition datasets including Oxford Flowers~\cite{nilsback2008flowers102}, Caltech-101~\cite{fei2004caltech101},and UCF101~\cite{soomro2012ucf101}. To avoid task-specific validation on the downstream benchmarks, we reserve ImageNet ~\cite{deng2009imagenet} exclusively as a held-out validation dataset for selecting a single hyperparameter configuration applied to all evaluation datasets. 


\noindent\textbf{Baselines.}
We evaluate \ours{} on top of four representative training-free few-shot adaptation methods for VLMs, covering diverse adaptation mechanisms. These include the cache-based retrieval method Tip-Adapter ~\cite{zhang2022tipadapter}, Gaussian Discriminant Analysis (GDA) ~\cite{wang2024baseline}, Adaptive Prior rEfinement (APE), ~\cite{zhu2023not}, and the recent kernel-based prototype refinement method ProKeR ~\cite{bendou2025proker}. This diversity allows us to assess the generality of \ours{} as a plug-and-play prototype rectification module that can be applied across different adaptation paradigms.

\noindent\textbf{Implementation Details.} All experiments are conducted using CLIP ~\cite{radford2021clip} pre-trained features with a ViT-B/16 backbone ~\cite{dosovitskiy2020image}. For each downstream task, we first apply the corresponding baseline method using the task-specific support set, and then perform our prototype rectification on top of the resulting classifier. All reported results are averaged over $400$ randomly sampled tasks. Data augmentation is applied during the feature extraction stage using random zoom, crops, and flips, following \cite{bendou2025proker}, with $10$ augmentations per support sample. 
We use the same text prompts per dataset as in \cite{bendou2025proker}. Following our claim that using a validation set for few-shot adaptation is unrealistic, we run all experiments using the same configuration across all datasets and shot numbers. Concretely, we set $\beta$ to $0.01$, $\gamma$ to $1$, and $\lambda$ to $0.05$. \ours {} is optimized for $3$ alternating rounds; in each round, we perform $1$ prototype-update step followed by $10$ LoRA-update steps. We use LoRA adapters with rank $8$ in the vision encoder, updating only the last three layers with a learning rate of $5 \times 10^{-4}$. All experiments are conducted on a single NVIDIA H100 SXM5 GPU.

\subsection{Severe imbalance with high effective classes}

\input{tables/high_severe}

Table~\ref{tab:severe_high_eff} reveals a characteristic degradation of training-free few-shot adaptation baselines under realistic class imbalance. In particular, since $C_{\mathrm{eff}}$ is fixed to a large label space ($80\%$ of classes), increasing the number of shots introduces more support samples but does not alleviate the skewed Dirichlet distribution. As a result, majority classes become increasingly dominant, while minority classes remain under-represented, leading to progressively biased prototype estimates. This effect is clearly reflected in the performance degradation of several baselines as $K$ increases. 
For instance, \textbf{Tip-Adapter drops from $\boldsymbol{63.66}$ to $\boldsymbol{33.25}$ when going from 2- to 16-shots}, while APE's performance decreases from $68.26$ to 
$54.15$. In contrast, \ours{} explicitly rectifies these biased prototypes, leading to substantial recovery in performance under severe imbalance. For example, \ours{} improves the mean accuracy of Tip-Adapter by $\boldsymbol{+23.22}$ in 4-shot ($\boldsymbol{49.64\rightarrow72.86}$) and by $\boldsymbol{+35.05}$ in 16-shot ($\boldsymbol{33.25\rightarrow68.30}$), 
effectively reversing the degradation trend. 
Similarly, APE's performance is enhanced by $\boldsymbol{+7.65}$ and $\boldsymbol{+14.59}$ in 4-shot and 16-shot, respectively, restoring performance despite the increasing imbalance. The effect is particularly pronounced on datasets where imbalance severely distorts the baseline prototypes. For instance, on EuroSAT and Flowers, \ours{} brings improvements to Tip-Adapter \textbf{ranging from nearly 50-70\%} in 4- and 16-shot, whereas APE's \textbf{performance gains range from 22.23\% to 42.02\%} under the same settings. 
Importantly, improvements are consistent across different adaptation mechanisms: GDA shows steady average gains of $+3.29$, $+4.05$, and $+6.56$ from 2-shot to 16-shot, while ProKeR, despite being a strong baseline, still benefits with gains of $+1.56$ and $+3.03$ in 4-shot and 16-shot scenarios. An interesting observation is that, simple methods such as Tip-Adapter, which substantially underperform arguably more complex approaches (i.e., ProKeR), are at par when \ours{} rectifies their visual prototypes (e.g., 72.19 \textit{vs.} 72.31 in 2-shot; 72.86 \textit{vs.} 71.56 in 4-shot), yielding state-of-the-art performance. These results show that \ours{} provides a simple yet effective plug-and-play correction that regularizes prototype geometry, preserving semantic fidelity while preventing collapse toward dominant classes.

\subsection{Near-balanced setting with high effective classes.}

\input{tables/high_bal}

In Table~\ref{tab:balanced_high_eff} we present the results under near-balanced class distributions, where the Dirichlet concentration is high ($0.9$) and class proportions are relatively uniform. In contrast to the severe imbalance regime, the degradation of baseline methods with increasing shots is less pronounced. This is expected, as the support set is more uniform across classes, leading to more stable prototype estimates. For instance, Tip-Adapter maintains relatively consistent performance from $68.80$ (2-shot) to $62.98$ (4-shot), before dropping to $45.98$ at 16-shot, indicating that imbalance-induced bias is less but not entirely mitigated. 
Despite this improved stability, \ours{} consistently enhances performance across all methods and shot regimes. For Tip-Adapter, the mean accuracy improves substantially \textbf{from 62.98 to 74.90} in 4-shot ($\boldsymbol{+11.92}$) and \textbf{from 45.98 to 75.77} in 16-shot ($\boldsymbol{+29.79}$). Notably, even in this near-balanced setting, larger gains are observed on datasets such as EuroSAT ($+34.56$ in 4-shot, $+45.37$ in 16-shot) and Flowers ($+29.57$ in 4-shot, $+51.27$ in 16-shot), suggesting that prototype bias persists beyond extreme imbalance and can still significantly affect adaptation. For other baselines, the improvements are more moderate but remain consistent: GDA achieves gains of $\boldsymbol{+4.96}$ for 16-shots, while APE improves by $\boldsymbol{+10.99}$. Even for ProKeR, which already performs strongly, \ours{} provides slight gains in most cases (e.g., $+0.58$ in 4-shot and $+0.41$ in 16-shot), indicating that our method complements rather than replaces strong adaptation mechanisms.


\subsection{Evaluation with Low effective classes.}

We now report the average performance for low effective classes in Table~\ref{tab:low_eff_mean}. In this regime, the adaptation problem is intrinsically easier, as the support set covers a reduced label space, leading to more reliable class-wise statistics even under imbalance. Consequently, baseline methods exhibit significantly less degradation compared to the high effective class setting. 
Furthermore, higher performing approaches such as ProKeR remain relatively stable across shots (e.g., $85.47 \rightarrow 83.74$), indicating that the reduced label space reduces confusion among effective classes.

\input{tables/low_means}

Even in this easier regime, \ours{} \textbf{consistently improves performance across all methods and settings}. The gains are particularly pronounced for weaker baselines. For instance, Tip-Adapter benefits from substantial improvements ranging from $\boldsymbol{+12.38}$ (4-shot) to $\boldsymbol{+31.75}$ (16-shot) under severe imbalance, and from $\boldsymbol{+6.34}$ to $\boldsymbol{+20.50}$ under near-balanced conditions. This indicates that even when the number of effective classes is limited, imbalance still introduces biases in prototype estimation that accumulate with increasing shots.
Even for ProKeR, which already produces well-structured prototypes, \ours{} still yields additional gains, particularly in higher-shot regimes (e.g., $\boldsymbol{+3.59}$ under severe imbalance at 16-shot). Overall, these results highlight a similar 
trend to the high effective class setting and portrays how \ours{} improves the performance across both severe and near-balanced regimes.



\subsection{Varying Lipschitz learning rate.}

In this section we analyze the impact of Lipschitz-based learning rate on performance. As discussed earlier, the proposed prototype rectification objective admits a quadratic form with a closed-form upper bound on its curvature. This allows us to derive a valid block Lipschitz constant $\rho$, which naturally induces a principled learning rate of $1 / \rho $ for the prototype update. Unlike conventional hyperparameter tuning, this choice is directly tied to the geometry of the optimization landscape and guarantees stable MM updates.
We conduct this analysis on top of ProKeR~\cite{bendou2025proker} under the 16-shot severe-imbalance setting with a high number of effective classes, using EuroSat, DTD and Flowers as representative datasets. To isolate the effect of the prototype-block step size, we vary only the prototype learning rate while keeping all other hyperparameters fixed. As observed from Fig.~\ref{fig:lr}, performance exhibits a clear sensitivity to the learning rate. Extremely small learning rates under-optimize the prototype correction, leading to weaker alignment between the rectified prototypes and the support-set feature distribution. In contrast, the Lipschitz-based learning rate, indicated by the vertical dashed line, consistently matches the best-performing region across datasets. This indicates that the proposed step-size selection captures the effective curvature of the prototype rectification objective and provides a reliable operating point without requiring an exhaustive learning-rate sweep. Notably, the resulting step size is orders of magnitude larger than those typically used in deep learning, which often lie in the range $[10^{-4}, 10^{-2}]$; see ~\cite{zhang2022tipadapter} for instance.

\input{plots/lr}


%% file: tables/high_severe.tex
\begin{table*}[!htbp]
\centering
\small
\setlength{\tabcolsep}{1pt}
\renewcommand{\arraystretch}{1.08}
\resizebox{\textwidth}{!}{%
\begin{tabular}{l|cccccccccc|c}
\toprule
 & \textbf{Caltech} & \textbf{DTD} & \textbf{EuroSAT} & \textbf{FGVC} & \textbf{Food101} & \textbf{Flowers} & \textbf{Pets} & \textbf{Cars} & \textbf{Sun397} & \textbf{UCF101} & \textbf{Mean} \\
\midrule

\multicolumn{12}{c}{\textbf{16-shot}} \\
\midrule
Tip-Adapter 
& 82.51 & 21.56 & 21.05 & 17.71 & 19.64 & 20.06 & 21.51 & 35.24 & 48.73 & 44.53 & 33.25 \\
\rowcolor{gray!15}
w/ \ours 
& \textbf{93.24\imp{+10.73}} 
& \textbf{61.76\imp{+40.21}} 
& \textbf{74.36\imp{+53.32}} 
& \textbf{31.40\imp{+13.69}} 
& \textbf{33.92\imp{+14.29}} 
& \textbf{84.98\imp{+64.92}} 
& \textbf{91.04\imp{+69.53}} 
& \textbf{68.10\imp{+32.86}} 
& \textbf{68.20\imp{+19.47}} 
& \textbf{76.03\imp{+31.50}} 
& \textbf{68.30\imp{+35.05}} \\

GDA 
& 87.68 & 44.33 & 58.34 & 23.08 & 55.80 & 63.59 & 66.29 & 50.72 & 60.31 & 66.26 & 57.64 \\
\rowcolor{gray!15}
w/ \ours 
& \textbf{88.11\imp{+0.43}} 
& \textbf{53.50\imp{+9.17}} 
& \textbf{58.47\imp{+0.12}} 
& \textbf{28.47\imp{+5.39}} 
& \textbf{68.79\imp{+12.99}} 
& \textbf{79.77\imp{+16.18}} 
& \textbf{74.55\imp{+8.26}} 
& \textbf{57.41\imp{+6.69}} 
& \textbf{64.57\imp{+4.25}} 
& \textbf{68.39\imp{+2.13}} 
& \textbf{64.20\imp{+6.56}} \\

APE 
& 91.67 & 50.77 & 34.10 & 19.26 & 60.34 & 41.50 & 69.25 & 44.17 & 62.08 & 68.38 & 54.15 \\
\rowcolor{gray!15}
w/ \ours 
& \textbf{92.80\imp{+1.13}} 
& \textbf{61.20\imp{+10.44}} 
& \textbf{76.12\imp{+42.02}} 
& \textbf{26.83\imp{+7.57}} 
& \textbf{76.35\imp{+16.01}} 
& \textbf{69.73\imp{+28.23}} 
& \textbf{86.82\imp{+17.57}} 
& \textbf{56.75\imp{+12.58}} 
& \textbf{65.74\imp{+3.66}} 
& \textbf{75.08\imp{+6.71}} 
& \textbf{68.74\imp{+14.59}} \\

ProKeR 
& 91.86 & 57.39 & 65.61 & 34.71 & 82.47 & 86.91 & 84.18 & 68.39 & 67.91 & 74.54 & 71.40 \\
\rowcolor{gray!15}
w/ \ours 
& \textbf{92.80\imp{+0.94}} 
& \textbf{62.75\imp{+5.35}} 
& \textbf{80.65\imp{+15.04}} 
& \textbf{35.13\imp{+0.42}} 
& \textbf{81.51\wor{-0.97}} 
& \textbf{88.94\imp{+2.04}} 
& \textbf{90.10\imp{+5.92}} 
& \textbf{68.38\wor{-0.01}} 
& \textbf{67.50\wor{-0.41}} 
& \textbf{76.51\imp{+1.97}} 
& \textbf{74.43\imp{+3.03}} \\

\midrule
\multicolumn{12}{c}{\textbf{4-shot}} \\
\midrule
Tip-Adapter 
& 89.58 & 34.70 & 27.91 & 16.32 & 72.07 & 26.43 & 57.81 & 47.83 & 59.91 & 63.88 & 49.64 \\
\rowcolor{gray!15}
w/ \ours 
& \textbf{92.97\imp{+3.39}} 
& \textbf{59.89\imp{+25.19}} 
& \textbf{78.12\imp{+50.21}} 
& \textbf{31.00\imp{+14.69}} 
& \textbf{82.33\imp{+10.27}} 
& \textbf{84.24\imp{+57.81}} 
& \textbf{90.41\imp{+32.61}} 
& \textbf{67.12\imp{+19.30}} 
& \textbf{67.26\imp{+7.35}} 
& \textbf{75.27\imp{+11.39}} 
& \textbf{72.86\imp{+23.22}} \\

GDA 
& 86.12 & 45.04 & 58.32 & 23.23 & 54.52 & 68.29 & 62.83 & 45.71 & 54.45 & 64.43 & 56.30 \\
\rowcolor{gray!15}
w/ \ours 
& \textbf{86.72\imp{+0.60}} 
& \textbf{49.96\imp{+4.92}} 
& \textbf{57.22\wor{-1.11}} 
& \textbf{26.51\imp{+3.27}} 
& \textbf{61.54\imp{+7.03}} 
& \textbf{77.02\imp{+8.73}} 
& \textbf{71.08\imp{+8.25}} 
& \textbf{50.68\imp{+4.97}} 
& \textbf{57.10\imp{+2.65}} 
& \textbf{65.67\imp{+1.23}} 
& \textbf{60.35\imp{+4.05}} \\

APE 
& 92.11 & 49.91 & 52.32 & 19.19 & 84.13 & 58.52 & 86.95 & 55.07 & 65.92 & 71.68 & 63.58 \\
\rowcolor{gray!15}
w/ \ours 
& \textbf{92.56\imp{+0.45}} 
& \textbf{59.75\imp{+9.84}} 
& \textbf{77.22\imp{+24.90}} 
& \textbf{28.14\imp{+8.94}} 
& \textbf{81.12\wor{-3.01}} 
& \textbf{80.75\imp{+22.23}} 
& \textbf{89.73\imp{+2.78}} 
& \textbf{61.62\imp{+6.55}} 
& \textbf{66.18\imp{+0.26}} 
& \textbf{75.27\imp{+3.59}} 
& \textbf{71.23\imp{+7.65}} \\

ProKeR 
& 92.21 & 56.42 & 66.09 & 32.38 & 84.84 & 86.20 & 86.21 & 68.44 & 68.17 & 74.63 & 71.56 \\
\rowcolor{gray!15}
w/ \ours 
& \textbf{92.62\imp{+0.41}} 
& \textbf{60.53\imp{+4.12}} 
& \textbf{79.01\imp{+12.92}} 
& \textbf{33.42\imp{+1.04}} 
& \textbf{81.16\wor{-3.68}} 
& \textbf{87.64\imp{+1.44}} 
& \textbf{88.93\imp{+2.72}} 
& \textbf{66.49\wor{-1.95}} 
& \textbf{66.15\wor{-2.01}} 
& \textbf{75.21\imp{+0.58}} 
& \textbf{73.12\imp{+1.56}} \\

\midrule
\multicolumn{12}{c}{\textbf{2-shot}} \\
\midrule
Tip-Adapter 
& 92.37 & 47.36 & 47.07 & 23.09 & 85.48 & 56.42 & 83.49 & 63.63 & 66.17 & 71.52 & 63.66 \\
\rowcolor{gray!15}
w/ \ours 
& \textbf{92.92\imp{+0.55}} 
& \textbf{58.53\imp{+11.17}} 
& \textbf{76.37\imp{+29.30}} 
& \textbf{30.61\imp{+7.52}} 
& \textbf{81.88\wor{-3.60}} 
& \textbf{83.88\imp{+27.47}} 
& \textbf{89.69\imp{+6.20}} 
& \textbf{66.36\imp{+2.73}} 
& \textbf{66.27\imp{+0.10}} 
& \textbf{75.37\imp{+3.85}} 
& \textbf{72.19\imp{+8.53}} \\

GDA 
& 84.41 & 43.48 & 57.08 & 23.15 & 57.06 & 70.03 & 58.75 & 43.46 & 52.70 & 62.69 & 55.28 \\
\rowcolor{gray!15}
w/ \ours 
& \textbf{85.33\imp{+0.93}} 
& \textbf{47.68\imp{+4.20}} 
& \textbf{54.33\wor{-2.75}} 
& \textbf{26.00\imp{+2.85}} 
& \textbf{61.74\imp{+4.68}} 
& \textbf{75.70\imp{+5.67}} 
& \textbf{68.85\imp{+10.10}} 
& \textbf{47.92\imp{+4.46}} 
& \textbf{54.17\imp{+1.47}} 
& \textbf{63.96\imp{+1.27}} 
& \textbf{58.57\imp{+3.29}} \\

APE 
& 93.05 & 49.03 & 59.16 & 25.65 & 87.01 & 73.49 & 90.30 & 65.95 & 67.44 & 71.52 & 68.26 \\
\rowcolor{gray!15}
w/ \ours 
& \textbf{92.88\wor{-0.17}} 
& \textbf{58.28\imp{+9.25}} 
& \textbf{76.65\imp{+17.49}} 
& \textbf{29.97\imp{+4.31}} 
& \textbf{81.43\wor{-5.59}} 
& \textbf{84.47\imp{+10.98}} 
& \textbf{89.82\wor{-0.48}} 
& \textbf{64.89\wor{-1.07}} 
& \textbf{66.05\wor{-1.39}} 
& \textbf{74.52\imp{+3.00}} 
& \textbf{71.89\imp{+3.63}} \\

ProKeR 
& 92.46 & 56.32 & 69.11 & 31.61 & 86.07 & 85.99 & 87.96 & 69.40 & 69.04 & 75.12 & 72.31 \\
\rowcolor{gray!15}
w/ \ours 
& \textbf{92.49\imp{+0.03}} 
& \textbf{58.55\imp{+2.23}} 
& \textbf{77.78\imp{+8.67}} 
& \textbf{31.36\wor{-0.25}} 
& \textbf{80.82\wor{-5.25}} 
& \textbf{86.91\imp{+0.92}} 
& \textbf{88.54\imp{+0.59}} 
& \textbf{65.60\wor{-3.80}} 
& \textbf{65.22\wor{-3.82}} 
& \textbf{74.92\wor{-0.20}} 
& \textbf{72.22\wor{-0.09}} \\

\bottomrule
\end{tabular}%
}
\caption{\textbf{Severe imbalance with high effective classes.} We report Top-1 accuracy (\%) across 10 datasets. Differences with base methods are shown as subscripts.}
\label{tab:severe_high_eff}
\end{table*}

%% file: tables/high_bal.tex
\begin{table*}[htbp]
\centering
\small
\setlength{\tabcolsep}{1pt}
\renewcommand{\arraystretch}{1.08}
\resizebox{\textwidth}{!}{%
\begin{tabular}{l|cccccccccc|c}
\toprule
 & \textbf{Caltech} & \textbf{DTD} & \textbf{EuroSAT} & \textbf{FGVC} & \textbf{Food101} & \textbf{Flowers} & \textbf{Pets} & \textbf{Cars} & \textbf{Sun397} & \textbf{UCF101} & \textbf{Mean} \\
\midrule

\multicolumn{12}{c}{\textbf{16-shot}} \\
\midrule
Tip-Adapter 
& 86.43 & 33.16 & 33.32 & 21.19 & 53.77 & 34.60 & 37.75 & 44.95 & 56.69 & 57.91 & 45.98 \\
\rowcolor{gray!15}
w/ \ours 
& \textbf{93.98\imp{+7.54}} 
& \textbf{64.85\imp{+31.69}} 
& \textbf{78.69\imp{+45.37}} 
& \textbf{32.75\imp{+11.56}} 
& \textbf{85.51\imp{+31.74}} 
& \textbf{85.87\imp{+51.27}} 
& \textbf{92.13\imp{+54.38}} 
& \textbf{72.60\imp{+27.65}} 
& \textbf{73.30\imp{+16.60}} 
& \textbf{78.00\imp{+20.09}} 
& \textbf{75.77\imp{+29.79}} \\

GDA 
& 91.60 & 53.66 & 67.12 & 29.29 & 69.66 & 77.28 & 76.86 & 65.19 & 69.82 & 73.33 & 67.38 \\
\rowcolor{gray!15}
w/ \ours 
& \textbf{91.85\imp{+0.25}} 
& \textbf{62.52\imp{+8.86}} 
& \textbf{65.98\wor{-1.14}} 
& \textbf{34.49\imp{+5.20}} 
& \textbf{78.42\imp{+8.77}} 
& \textbf{89.53\imp{+12.24}} 
& \textbf{83.47\imp{+6.61}} 
& \textbf{69.20\imp{+4.01}} 
& \textbf{72.81\imp{+2.99}} 
& \textbf{75.15\imp{+1.83}} 
& \textbf{72.34\imp{+4.96}} \\

APE 
& 91.79 & 55.39 & 50.53 & 24.39 & 77.85 & 61.31 & 79.26 & 52.98 & 66.35 & 72.35 & 63.22 \\
\rowcolor{gray!15}
w/ \ours 
& \textbf{93.58\imp{+1.78}} 
& \textbf{65.61\imp{+10.22}} 
& \textbf{83.42\imp{+32.89}} 
& \textbf{31.14\imp{+6.76}} 
& \textbf{83.85\imp{+6.00}} 
& \textbf{79.72\imp{+18.41}} 
& \textbf{90.91\imp{+11.65}} 
& \textbf{64.26\imp{+11.28}} 
& \textbf{71.66\imp{+5.31}} 
& \textbf{77.96\imp{+5.60}} 
& \textbf{74.21\imp{+10.99}} \\

ProKeR 
& 93.84 & 67.33 & 81.40 & 43.17 & 86.36 & 93.34 & 89.56 & 77.45 & 73.70 & 80.39 & 78.65 \\
\rowcolor{gray!15}
w/ \ours 
& \textbf{94.50\imp{+0.66}} 
& \textbf{68.37\imp{+1.04}} 
& \textbf{86.44\imp{+5.04}} 
& \textbf{39.37\wor{-3.80}} 
& \textbf{85.66\wor{-0.71}} 
& \textbf{92.49\wor{-0.85}} 
& \textbf{92.88\imp{+3.32}} 
& \textbf{76.92\wor{-0.53}} 
& \textbf{73.70\imp{+0.00}} 
& \textbf{80.30\wor{-0.09}} 
& \textbf{79.06\imp{+0.41}} \\

\midrule
\multicolumn{12}{c}{\textbf{4-shot}} \\
\midrule
Tip-Adapter 
& 91.50 & 49.39 & 48.13 & 23.01 & 83.82 & 55.90 & 79.50 & 60.75 & 66.37 & 71.46 & 62.98 \\
\rowcolor{gray!15}
w/ \ours 
& \textbf{93.33\imp{+1.83}} 
& \textbf{62.34\imp{+12.95}} 
& \textbf{82.70\imp{+34.56}} 
& \textbf{32.13\imp{+9.12}} 
& \textbf{83.97\imp{+0.14}} 
& \textbf{85.46\imp{+29.57}} 
& \textbf{91.80\imp{+12.30}} 
& \textbf{70.23\imp{+9.49}} 
& \textbf{70.25\imp{+3.87}} 
& \textbf{76.83\imp{+5.37}} 
& \textbf{74.90\imp{+11.92}} \\

GDA 
& 88.31 & 50.60 & 63.92 & 26.45 & 61.09 & 74.04 & 68.88 & 52.67 & 60.25 & 67.80 & 61.40 \\
\rowcolor{gray!15}
w/ \ours 
& \textbf{88.49\imp{+0.18}} 
& \textbf{55.34\imp{+4.74}} 
& \textbf{62.20\wor{-1.71}} 
& \textbf{29.94\imp{+3.49}} 
& \textbf{68.39\imp{+7.30}} 
& \textbf{83.04\imp{+9.00}} 
& \textbf{76.83\imp{+7.95}} 
& \textbf{56.89\imp{+4.22}} 
& \textbf{62.56\imp{+2.31}} 
& \textbf{69.03\imp{+1.24}} 
& \textbf{65.27\imp{+3.87}} \\

APE 
& 92.76 & 50.81 & 61.96 & 26.17 & 86.87 & 74.96 & 89.87 & 64.58 & 68.45 & 72.88 & 68.93 \\
\rowcolor{gray!15}
w/ \ours 
& \textbf{93.38\imp{+0.63}} 
& \textbf{62.72\imp{+11.91}} 
& \textbf{81.46\imp{+19.49}} 
& \textbf{32.43\imp{+6.26}} 
& \textbf{83.93\wor{-2.94}} 
& \textbf{86.23\imp{+11.28}} 
& \textbf{91.44\imp{+1.58}} 
& \textbf{68.26\imp{+3.67}} 
& \textbf{70.02\imp{+1.57}} 
& \textbf{77.08\imp{+4.20}} 
& \textbf{74.69\imp{+5.76}} \\

ProKeR 
& 93.10 & 61.67 & 76.62 & 36.12 & 86.24 & 89.75 & 88.65 & 72.70 & 71.18 & 77.57 & 75.36 \\
\rowcolor{gray!15}
w/ \ours 
& \textbf{93.46\imp{+0.36}} 
& \textbf{63.50\imp{+1.82}} 
& \textbf{84.55\imp{+7.93}} 
& \textbf{35.36\wor{-0.76}} 
& \textbf{83.49\wor{-2.76}} 
& \textbf{89.79\imp{+0.04}} 
& \textbf{90.55\imp{+1.89}} 
& \textbf{71.08\wor{-1.62}} 
& \textbf{69.98\wor{-1.20}} 
& \textbf{77.64\imp{+0.07}} 
& \textbf{75.94\imp{+0.58}} \\

\midrule
\multicolumn{12}{c}{\textbf{2-shot}} \\
\midrule
Tip-Adapter 
& 92.67 & 50.55 & 59.23 & 28.02 & 87.06 & 72.99 & 89.03 & 68.38 & 67.77 & 72.31 & 68.80 \\
\rowcolor{gray!15}
w/ \ours 
& \textbf{92.89\imp{+0.21}} 
& \textbf{59.67\imp{+9.12}} 
& \textbf{78.53\imp{+19.30}} 
& \textbf{31.25\imp{+3.23}} 
& \textbf{82.52\wor{-4.54}} 
& \textbf{84.26\imp{+11.27}} 
& \textbf{90.85\imp{+1.82}} 
& \textbf{67.96\wor{-0.42}} 
& \textbf{68.04\imp{+0.28}} 
& \textbf{75.57\imp{+3.26}} 
& \textbf{73.15\imp{+4.35}} \\

GDA 
& 85.72 & 46.90 & 59.91 & 24.92 & 59.60 & 73.33 & 61.84 & 46.46 & 55.18 & 64.62 & 57.85 \\
\rowcolor{gray!15}
w/ \ours 
& \textbf{86.41\imp{+0.69}} 
& \textbf{51.32\imp{+4.42}} 
& \textbf{56.89\wor{-3.02}} 
& \textbf{27.77\imp{+2.85}} 
& \textbf{64.26\imp{+4.66}} 
& \textbf{78.49\imp{+5.16}} 
& \textbf{71.76\imp{+9.92}} 
& \textbf{50.94\imp{+4.48}} 
& \textbf{56.57\imp{+1.38}} 
& \textbf{65.80\imp{+1.18}} 
& \textbf{61.02\imp{+3.17}} \\

APE 
& 92.86 & 49.51 & 63.68 & 29.03 & 87.48 & 78.67 & 90.84 & 69.58 & 68.06 & 72.04 & 70.18 \\
\rowcolor{gray!15}
w/ \ours 
& \textbf{92.80\wor{-0.06}} 
& \textbf{60.20\imp{+10.69}} 
& \textbf{78.44\imp{+14.76}} 
& \textbf{31.22\imp{+2.19}} 
& \textbf{82.38\wor{-5.10}} 
& \textbf{85.71\imp{+7.04}} 
& \textbf{90.36\wor{-0.48}} 
& \textbf{67.52\wor{-2.06}} 
& \textbf{67.73\wor{-0.33}} 
& \textbf{75.84\imp{+3.80}} 
& \textbf{73.22\imp{+3.04}} \\

ProKeR 
& 92.89 & 58.87 & 73.60 & 33.71 & 86.59 & 87.70 & 89.58 & 71.38 & 70.38 & 76.84 & 74.15 \\
\rowcolor{gray!15}
w/ \ours 
& \textbf{92.76\wor{-0.13}} 
& \textbf{60.35\imp{+1.48}} 
& \textbf{80.53\imp{+6.93}} 
& \textbf{33.15\wor{-0.56}} 
& \textbf{81.91\wor{-4.68}} 
& \textbf{88.14\imp{+0.44}} 
& \textbf{89.71\imp{+0.13}} 
& \textbf{67.85\wor{-3.53}} 
& \textbf{67.29\wor{-3.09}} 
& \textbf{76.44\wor{-0.40}} 
& \textbf{73.81\wor{-0.34}} \\

\bottomrule
\end{tabular}%
}
\caption{\textbf{Near-balanced setting with high effective classes.} Top-1 accuracy (\%) across 10 datasets.}
\label{tab:balanced_high_eff}
\end{table*}

%% file: tables/low_means.tex
\begin{table}[htbp]
\centering
\small
\centering
\scriptsize
\setlength{\tabcolsep}{3pt}
\renewcommand{\arraystretch}{1.08}
\resizebox{0.7\textwidth}{!}{%
\begin{tabular}{l|ccc|ccc}
\toprule
& \multicolumn{3}{c|}{\textbf{Severe imbalance}} 
& \multicolumn{3}{c}{\textbf{Near balanced}} \\
 
& \textbf{2-shot} & \textbf{4-shot} & \textbf{16-shot}
& \textbf{2-shot} & \textbf{4-shot} & \textbf{16-shot} \\
\midrule

Tip-Adapter
& 82.69 & 74.14 & 55.29
& 84.31 & 81.68 & 68.32 \\
\rowcolor{gray!15}
w/ \ours
& \textbf{86.14\imp{+3.44}}
& \textbf{86.52\imp{+12.38}}
& \textbf{87.03\imp{+31.75}}
& \textbf{86.71\imp{+2.40}}
& \textbf{88.02\imp{+6.34}}
& \textbf{88.81\imp{+20.50}} \\

GDA
& 74.21 & 75.93 & 77.40
& 76.07 & 79.79 & 83.82 \\
\rowcolor{gray!15}
w/ \ours
& \textbf{77.16\imp{+2.95}}
& \textbf{78.63\imp{+2.69}}
& \textbf{78.01\imp{+0.61}}
& \textbf{78.55\imp{+2.48}}
& \textbf{81.95\imp{+2.16}}
& \textbf{85.83\imp{+2.01}} \\

APE
& 83.91 & 82.91 & 79.44
& 84.42 & 84.71 & 83.37 \\
\rowcolor{gray!15}
w/ \ours
& \textbf{86.18\imp{+2.27}}
& \textbf{86.28\imp{+3.36}}
& \textbf{86.66\imp{+7.22}}
& \textbf{86.92\imp{+2.50}}
& \textbf{88.03\imp{+3.31}}
& \textbf{89.46\imp{+6.09}} \\

ProKeR
& 85.47 & 84.64 & 83.74
& 86.64 & 87.44 & 89.31 \\
\rowcolor{gray!15}
w/ \ours
& \textbf{85.91\imp{+0.43}}
& \textbf{86.30\imp{+1.66}}
& \textbf{87.32\imp{+3.59}}
& \textbf{86.73\imp{+0.09}}
& \textbf{88.38\imp{+0.94}}
& \textbf{90.49\imp{+1.18}} \\

\bottomrule
\end{tabular}%
}
\caption{\textbf{Severe imbalance and near balanced settings with low effective classes.} We report mean Top-1 accuracy (\%) across 10 datasets. Improvements over base methods are shown as subscripts; positive gains are highlighted in \color{Better}{green}.}
\label{tab:low_eff_mean}
\end{table}

%% file: plots/lr.tex
\begin{figure}[ht]
    \centering
    \begin{minipage}[b]{0.32\textwidth}
        \includegraphics[width=\textwidth]{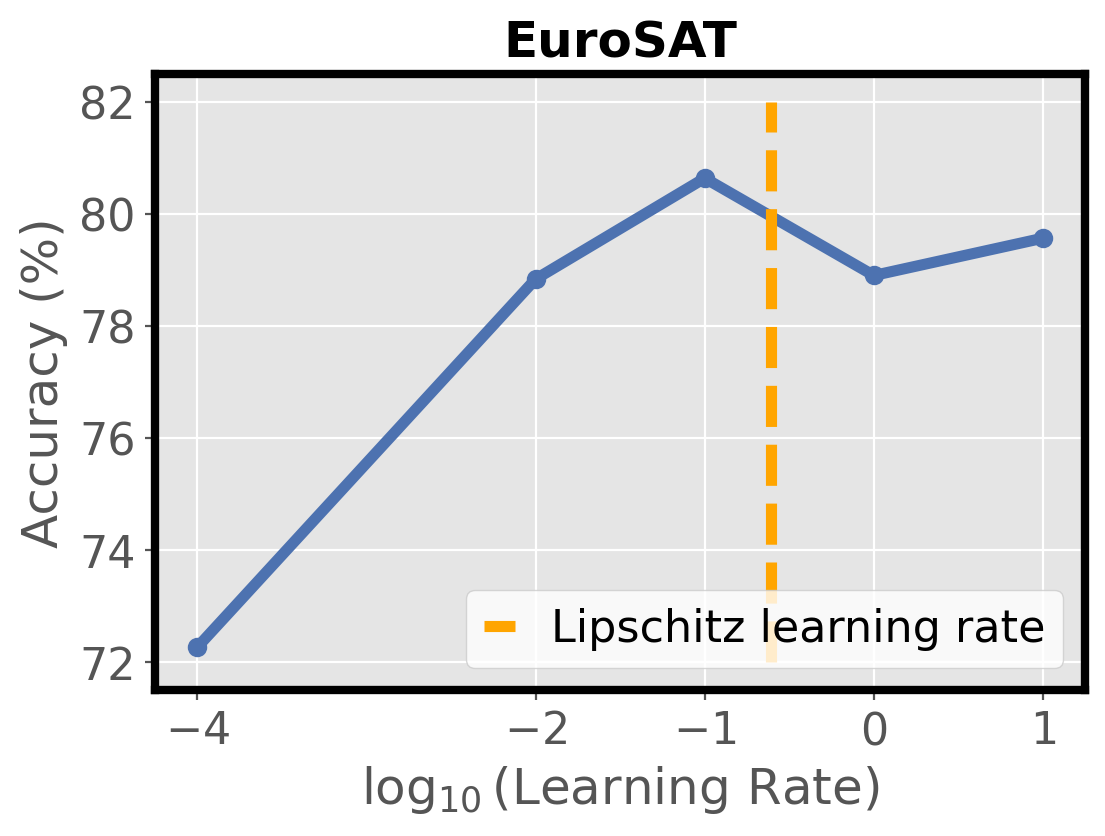}
    \end{minipage}
    \hfill
    \begin{minipage}[b]{0.32\textwidth}
        \includegraphics[width=\textwidth]{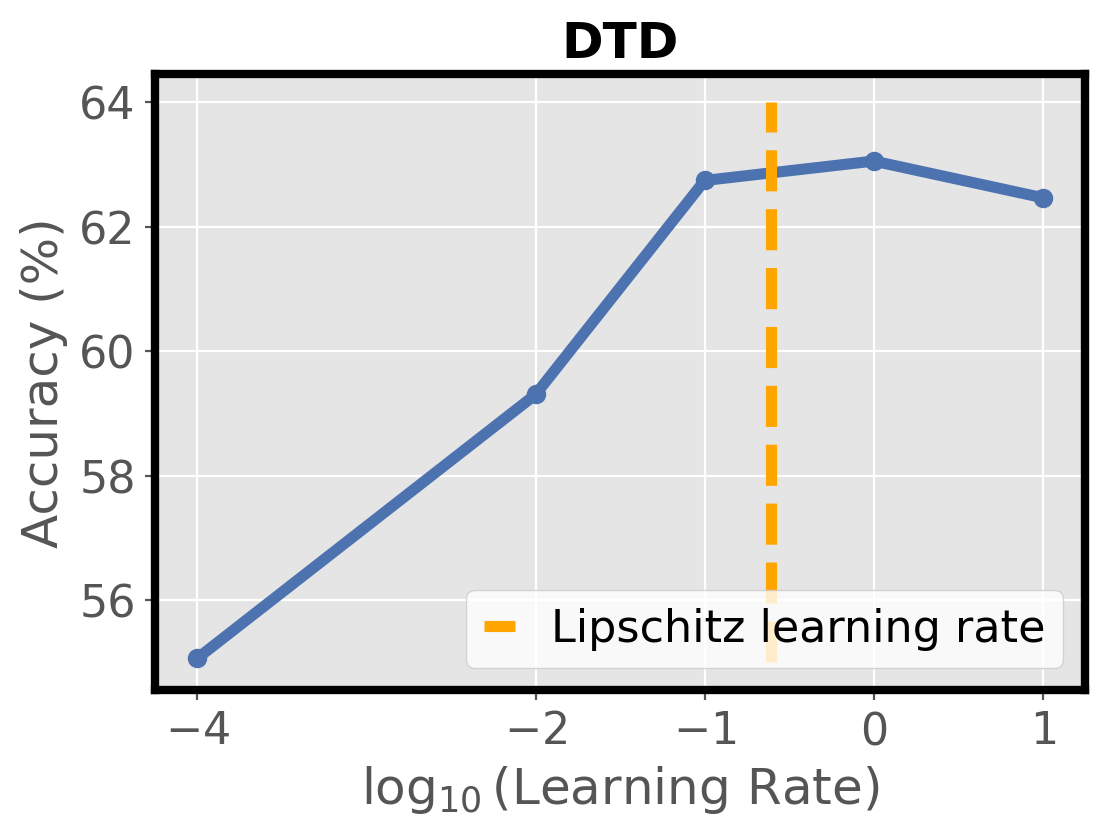}
    \end{minipage}
    \hfill
    \begin{minipage}[b]{0.32\textwidth}
        \includegraphics[width=\textwidth]{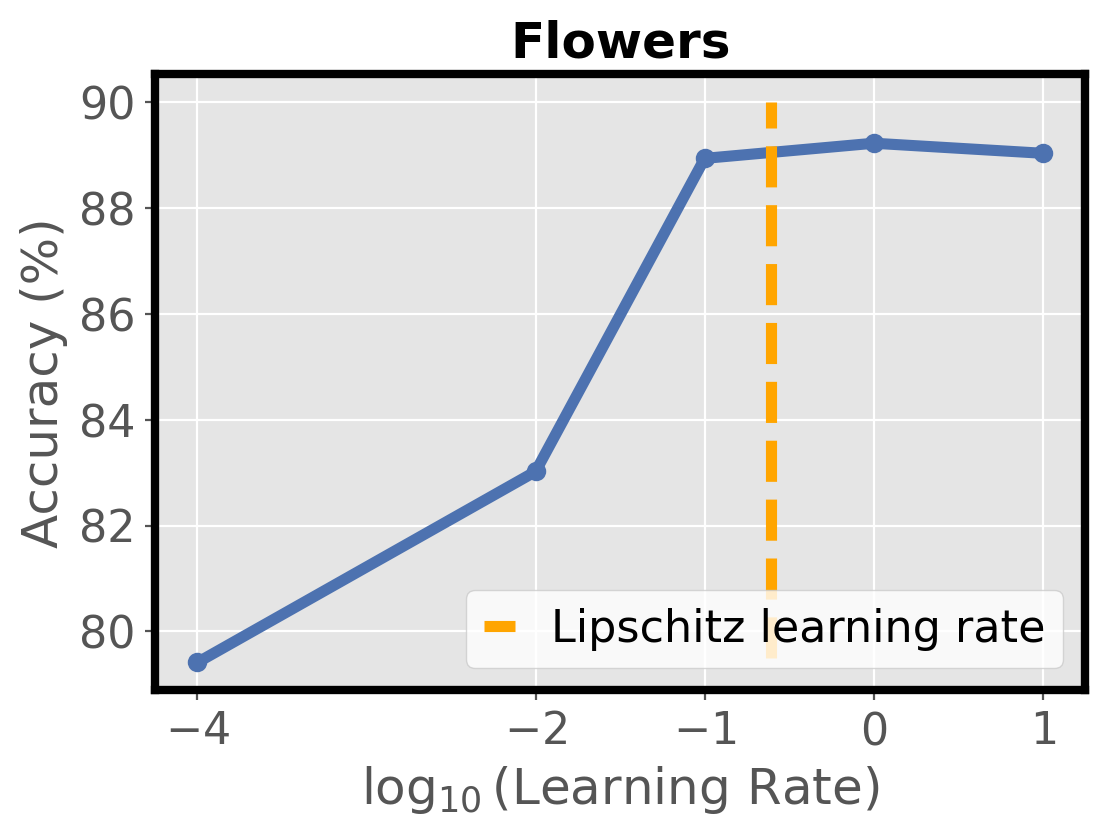}
    \end{minipage}
    \caption{\textbf{Prototype-block performance as a function of different values of the learning rates.} The dotted vertical lines show the Lipschitz-based learning rate.}
    \label{fig:lr}
\end{figure}

%% file: 5_Conclusion.tex
\section{Conclusion}

We revisited the evaluation of training‑free few‑shot adaptation for vision–language models by relaxing unrealistic assumptions on class balance and task composition. Through a Dirichlet‑based benchmark, we showed that the performances of many state‑of‑the‑art methods degrade significantly under class imbalance and large numbers of classes, even with increased supervision. To address this limitation, we propose \ours, a plug‑and‑play class‑prototype regularization that consistently improves performance across diverse settings. \ours{} is optimized via an efficient block majorize–minimize algorithm with theoretically grounded Lipschitz-gradient bound, ensuring both effectiveness and scalability. Extensive experiments demonstrate that \ours{} substantially enhances robustness in challenging few‑shot regimes. We hope that our benchmark and method will encourage more realistic evaluations and foster the development of adaptation techniques better suited to real‑world deployment.

%% file: X_suppl.tex
\section{Supplementary}
\subsection{Proof of proposition 1}

The spectral norm of the Hessian of quadratic objective ${\cal L}_w$ is given by:
\[2 \|\mathbf{\Phi}\|_2 = 2 \| \mathbf{R}\otimes \mathbf{I}_d \|_2 = 2\| \mathbf{R} \|_2  \| \mathbf{I}_d \|_2 = 2 \| \mathbf{R} \|_2\]
The second equality above comes from the spectral-norm property of the Kronecker product, and the last comes from the fact $\| \mathbf{I}_d \|_2 = 1$. 




Now we need to find an upper bound on $\| \mathbf{R}\|_2$. As $\mathbf{R}$ is symmetric, $\| \mathbf{R}\|_2$ is the largest absolute eigenvalue of $\mathbf{R}$. 
By Gershgorin’s circle theorem, for each eigenvalue $s$ of $\mathbf{R}$, there exists at least a row of the matrix such that $s$ lies within a disk centered at the diagonal entry of the row, 
with a radius equal to the sum of absolute values of the off-diagonal elements in that row. 
Now observe that all the diagonal elements of $\mathbf{R}$ are given by
\begin{equation}
\label{appendix: diagonal-elements}
\mathbf{R}_{cc} = \beta +\gamma - \lambda
\end{equation}
 Also, for all rows of $\mathbf{R}$, the sum of absolute values of the non-diagonal elements is given by:
 \begin{equation}
 \label{appendix: off-diagonal-elements}
    \sum\limits_{c'\#c} |\ \mathbf{R}_{c'c} |\ = (C-1) \frac{\lambda}{C-1} = \lambda
\end{equation}
From \eqref{appendix: diagonal-elements} and \eqref{appendix: off-diagonal-elements}, it follows that every eigenvalue $s$ of $\mathbf{R}$ satisfies: 
\begin{equation}
    | s | \leq |\ \beta + \gamma - \lambda |\ + \lambda
\end{equation}
Therefore: 
\begin{equation}
    \| \mathbf{R} \|_2 \leq |\ \beta + \gamma - \lambda |\ + \lambda
\end{equation}
As $\|\mathbf{\Phi}\|_2 = \| \mathbf{R} \|_2$, we obtain the following upper bound on 
the spectral norm of the Hessian of quadratic objective ${\cal L}_w$, which provides 
a valid Lipschitz constant: 
\begin{equation}
   \rho = 2 (|\ \beta + \gamma - \lambda |\ + \lambda)  \geq 2 \| \mathbf{R} \|_2 
\end{equation}

\subsection{Effect of the Dirichlet Concentration Parameter on Class Imbalance}
\label{app:dirichlet}
Figure \ref{fig:dirichlet} illustrates the effect of the Dirichlet concentration parameter $\delta$ on class imbalance across three classes. Smaller values of $\delta$ produce more skewed class distributions, whereas larger values lead to more uniform sampling.

\begin{figure}[ht]
    \centering
    \begin{minipage}[b]{0.32\textwidth}
        \centering
        \includegraphics[width=\textwidth]{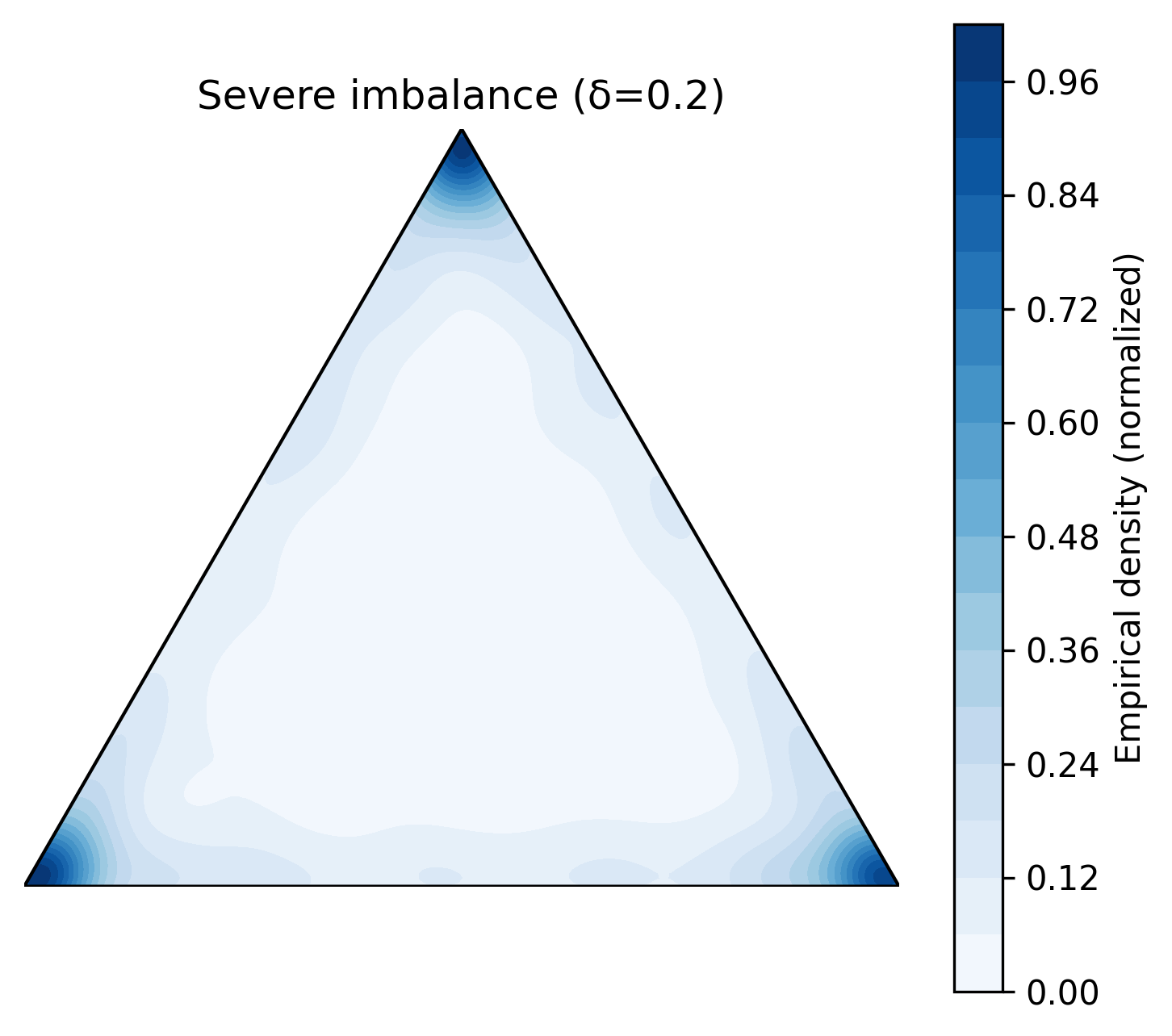}
    \end{minipage}
    \hspace{0.03\textwidth}
    \begin{minipage}[b]{0.32\textwidth}
        \centering
        \includegraphics[width=\textwidth]{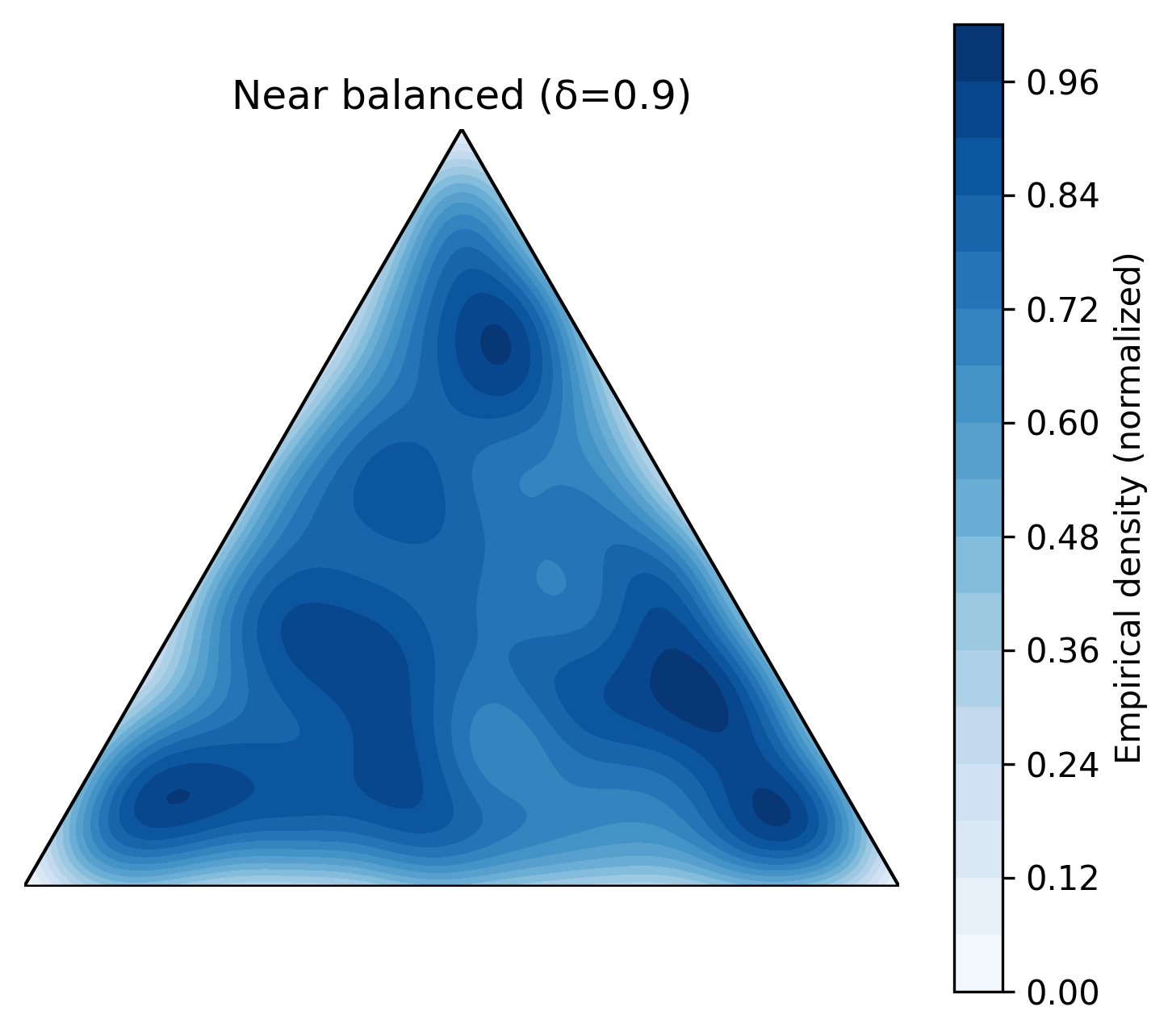}
    \end{minipage}
    \caption{Effect of the Dirichlet concentration parameter $\delta$ on class imbalance across three different classes.
    Smaller $\delta$ values yield more skewed class distributions, while larger values approach uniform sampling.}
    \label{fig:dirichlet}
\end{figure}


\newpage
\subsection{Detailed Results for Near Balanced Setting with Low Effective Classes.}
\input{tables/low_bal}
\newpage
\subsection{Detailed Results for Severe Imbalance Setting with Low Effective Classes.}
\input{tables/low_severe}

\subsection{Block Coordinate Majorize-Minimize Algorithm}
\label{algo}
\input{Algorithms/algorithm1}

%% file: tables/low_bal.tex
\begin{table*}[htbp]
\centering
\small
\setlength{\tabcolsep}{1.8pt}
\renewcommand{\arraystretch}{1.08}
\resizebox{\textwidth}{!}{%
\begin{tabular}{l|cccccccccc|c}
\toprule
 & \textbf{Caltech} & \textbf{DTD} & \textbf{EuroSAT} & \textbf{FGVC} & \textbf{Food101} & \textbf{Flowers} & \textbf{Pets} & \textbf{Cars} & \textbf{Sun397} & \textbf{UCF101} & \textbf{Mean} \\
\midrule

\multicolumn{12}{c}{\textbf{16-shot}} \\
\midrule
Tip-Adapter
& 95.64 & 52.62 & 69.89 & 39.01 & 78.38 & 54.46 & 67.76 & 69.86 & 78.01 & 77.54 & 68.32 \\
\rowcolor{gray!15}
w/ \ours
& \textbf{98.16\imp{+2.52}}
& \textbf{81.82\imp{+29.20}}
& \textbf{96.38\imp{+26.48}}
& \textbf{59.18\imp{+20.17}}
& \textbf{93.00\imp{+14.62}}
& \textbf{93.64\imp{+39.18}}
& \textbf{97.37\imp{+29.61}}
& \textbf{89.80\imp{+19.94}}
& \textbf{88.21\imp{+10.19}}
& \textbf{90.60\imp{+13.06}}
& \textbf{88.81\imp{+20.50}} \\

GDA
& 96.54 & 74.33 & 91.52 & 54.91 & 82.89 & 90.80 & 92.47 & 84.13 & 84.41 & 86.17 & 83.82 \\
\rowcolor{gray!15}
w/ \ours
& \textbf{96.74\imp{+0.20}}
& \textbf{78.63\imp{+4.29}}
& \textbf{92.84\imp{+1.32}}
& \textbf{56.71\imp{+1.81}}
& \textbf{86.80\imp{+3.91}}
& \textbf{94.07\imp{+3.27}}
& \textbf{94.82\imp{+2.35}}
& \textbf{84.47\imp{+0.34}}
& \textbf{85.79\imp{+1.38}}
& \textbf{87.40\imp{+1.24}}
& \textbf{85.83\imp{+2.01}} \\

APE
& 97.80 & 72.76 & 87.23 & 48.52 & 93.47 & 84.48 & 95.54 & 81.81 & 85.33 & 86.72 & 83.37 \\
\rowcolor{gray!15}
w/ \ours
& \textbf{98.25\imp{+0.45}}
& \textbf{82.64\imp{+9.89}}
& \textbf{97.31\imp{+10.07}}
& \textbf{61.06\imp{+12.54}}
& \textbf{93.67\imp{+0.20}}
& \textbf{96.32\imp{+11.83}}
& \textbf{97.75\imp{+2.21}}
& \textbf{88.75\imp{+6.95}}
& \textbf{88.23\imp{+2.89}}
& \textbf{90.61\imp{+3.89}}
& \textbf{89.46\imp{+6.09}} \\

ProKeR
& 98.16 & 81.49 & 94.37 & 63.41 & 93.79 & 97.02 & 96.03 & 90.51 & 88.01 & 90.28 & 89.31 \\
\rowcolor{gray!15}
w/ \ours
& \textbf{98.43\imp{+0.27}}
& \textbf{84.44\imp{+2.95}}
& \textbf{96.99\imp{+2.63}}
& \textbf{64.98\imp{+1.57}}
& \textbf{93.67\wor{-0.12}}
& \textbf{97.49\imp{+0.47}}
& \textbf{97.48\imp{+1.45}}
& \textbf{91.54\imp{+1.02}}
& \textbf{88.72\imp{+0.70}}
& \textbf{91.18\imp{+0.91}}
& \textbf{90.49\imp{+1.18}} \\

\midrule
\multicolumn{12}{c}{\textbf{4-shot}} \\
\midrule
Tip-Adapter
& 97.36 & 68.94 & 82.45 & 48.52 & 93.40 & 77.95 & 93.73 & 83.80 & 84.47 & 86.20 & 81.68 \\
\rowcolor{gray!15}
w/ \ours
& \textbf{98.07\imp{+0.71}}
& \textbf{80.46\imp{+11.52}}
& \textbf{94.48\imp{+12.03}}
& \textbf{57.37\imp{+8.85}}
& \textbf{92.73\wor{-0.66}}
& \textbf{95.07\imp{+17.11}}
& \textbf{97.36\imp{+3.64}}
& \textbf{88.33\imp{+4.53}}
& \textbf{86.64\imp{+2.17}}
& \textbf{89.70\imp{+3.50}}
& \textbf{88.02\imp{+6.34}} \\

GDA
& 95.25 & 69.06 & 88.90 & 49.67 & 80.97 & 89.94 & 86.01 & 76.49 & 78.58 & 83.08 & 79.79 \\
\rowcolor{gray!15}
w/ \ours
& \textbf{95.83\imp{+0.57}}
& \textbf{73.91\imp{+4.86}}
& \textbf{89.16\imp{+0.26}}
& \textbf{51.51\imp{+1.84}}
& \textbf{83.55\imp{+2.58}}
& \textbf{91.15\imp{+1.21}}
& \textbf{92.23\imp{+6.22}}
& \textbf{78.31\imp{+1.81}}
& \textbf{79.54\imp{+0.96}}
& \textbf{84.37\imp{+1.29}}
& \textbf{81.95\imp{+2.16}} \\

APE
& 97.99 & 69.38 & 89.63 & 53.13 & 94.52 & 87.71 & 96.99 & 86.88 & 85.07 & 85.85 & 84.71 \\
\rowcolor{gray!15}
w/ \ours
& \textbf{98.13\imp{+0.14}}
& \textbf{80.29\imp{+10.91}}
& \textbf{94.67\imp{+5.04}}
& \textbf{57.42\imp{+4.29}}
& \textbf{92.31\wor{-2.21}}
& \textbf{95.89\imp{+8.18}}
& \textbf{97.10\imp{+0.11}}
& \textbf{87.80\imp{+0.92}}
& \textbf{86.71\imp{+1.65}}
& \textbf{89.93\imp{+4.08}}
& \textbf{88.03\imp{+3.31}} \\

ProKeR
& 97.87 & 77.66 & 92.12 & 57.14 & 93.82 & 95.29 & 96.76 & 87.91 & 86.65 & 89.17 & 87.44 \\
\rowcolor{gray!15}
w/ \ours
& \textbf{98.11\imp{+0.24}}
& \textbf{81.37\imp{+3.71}}
& \textbf{94.97\imp{+2.85}}
& \textbf{58.87\imp{+1.73}}
& \textbf{92.26\wor{-1.56}}
& \textbf{96.75\imp{+1.47}}
& \textbf{97.27\imp{+0.51}}
& \textbf{88.08\imp{+0.17}}
& \textbf{86.19\wor{-0.46}}
& \textbf{89.92\imp{+0.75}}
& \textbf{88.38\imp{+0.94}} \\

\midrule
\multicolumn{12}{c}{\textbf{2-shot}} \\
\midrule
Tip-Adapter
& 97.60 & 70.20 & 87.81 & 52.92 & 94.48 & 85.62 & 96.58 & 86.96 & 84.62 & 86.31 & 84.31 \\
\rowcolor{gray!15}
w/ \ours
& \textbf{97.67\imp{+0.08}}
& \textbf{77.88\imp{+7.68}}
& \textbf{92.84\imp{+5.02}}
& \textbf{54.91\imp{+1.99}}
& \textbf{91.78\wor{-2.70}}
& \textbf{94.52\imp{+8.89}}
& \textbf{96.90\imp{+0.32}}
& \textbf{86.86\wor{-0.10}}
& \textbf{85.18\imp{+0.56}}
& \textbf{88.60\imp{+2.29}}
& \textbf{86.71\imp{+2.40}} \\

GDA
& 93.77 & 63.26 & 88.20 & 45.52 & 78.25 & 88.30 & 79.92 & 69.72 & 74.15 & 79.66 & 76.07 \\
\rowcolor{gray!15}
w/ \ours
& \textbf{94.58\imp{+0.81}}
& \textbf{68.23\imp{+4.97}}
& \textbf{86.44\wor{-1.76}}
& \textbf{47.92\imp{+2.40}}
& \textbf{80.88\imp{+2.63}}
& \textbf{88.93\imp{+0.63}}
& \textbf{88.81\imp{+8.89}}
& \textbf{73.34\imp{+3.62}}
& \textbf{75.40\imp{+1.24}}
& \textbf{81.02\imp{+1.36}}
& \textbf{78.55\imp{+2.48}} \\

APE
& 97.90 & 67.59 & 88.01 & 54.63 & 94.68 & 86.30 & 97.25 & 87.64 & 84.62 & 85.63 & 84.42 \\
\rowcolor{gray!15}
w/ \ours
& \textbf{97.91\imp{+0.01}}
& \textbf{78.05\imp{+10.46}}
& \textbf{93.77\imp{+5.77}}
& \textbf{56.21\imp{+1.58}}
& \textbf{91.70\wor{-2.97}}
& \textbf{94.77\imp{+8.46}}
& \textbf{96.74\wor{-0.51}}
& \textbf{86.42\wor{-1.22}}
& \textbf{85.22\imp{+0.61}}
& \textbf{88.42\imp{+2.79}}
& \textbf{86.92\imp{+2.50}} \\

ProKeR
& 97.73 & 75.37 & 89.66 & 56.53 & 93.85 & 94.18 & 96.34 & 87.66 & 86.21 & 88.84 & 86.64 \\
\rowcolor{gray!15}
w/ \ours
& \textbf{97.84\imp{+0.10}}
& \textbf{77.88\imp{+2.51}}
& \textbf{91.86\imp{+2.19}}
& \textbf{56.52\imp{+0.00}}
& \textbf{91.38\wor{-2.47}}
& \textbf{95.49\imp{+1.31}}
& \textbf{96.36\imp{+0.02}}
& \textbf{86.37\wor{-1.29}}
& \textbf{84.66\wor{-1.56}}
& \textbf{88.91\imp{+0.07}}
& \textbf{86.73\imp{+0.09}} \\

\bottomrule
\end{tabular}%
}
\caption{\textbf{Near-balanced setting with low effective classes.} We report Top-1 accuracy (\%) across 10 datasets. Improvements over base methods are shown as subscripts; positive gains are highlighted in \color{Better}{green}.}
\label{tab:near_balanced_low_eff}
\end{table*}

%% file: tables/low_severe.tex
\begin{table*}[htbp]
\centering
\small
\setlength{\tabcolsep}{1.8pt}
\renewcommand{\arraystretch}{1.08}
\resizebox{\textwidth}{!}{%
\begin{tabular}{l|cccccccccc|c}
\toprule
 & \textbf{Caltech} & \textbf{DTD} & \textbf{EuroSAT} & \textbf{FGVC} & \textbf{Food101} & \textbf{Flowers} & \textbf{Pets} & \textbf{Cars} & \textbf{Sun397} & \textbf{UCF101} & \textbf{Mean} \\
\midrule

\multicolumn{12}{c}{\textbf{16-shot}} \\
\midrule

Tip-Adapter
& 93.77 & 34.64 & 62.23 & 31.65 & 48.66 & 32.91 & 47.11 & 62.39 & 72.76 & 66.72 & 55.29 \\
\rowcolor{gray!15}
w/ \ours
& \textbf{98.15\imp{+4.38}}
& \textbf{79.01\imp{+44.36}}
& \textbf{93.55\imp{+31.31}}
& \textbf{55.29\imp{+23.64}}
& \textbf{92.08\imp{+43.42}}
& \textbf{93.81\imp{+60.90}}
& \textbf{96.55\imp{+49.44}}
& \textbf{87.28\imp{+24.89}}
& \textbf{85.61\imp{+12.85}}
& \textbf{88.98\imp{+22.26}}
& \textbf{87.03\imp{+31.75}} \\

GDA
& 95.38 & 65.95 & 86.62 & 47.51 & 74.50 & 84.59 & 87.17 & 73.98 & 77.05 & 81.26 & 77.40 \\
\rowcolor{gray!15}
w/ \ours
& \textbf{95.46\imp{+0.08}}
& \textbf{66.62\imp{+0.67}}
& \textbf{86.78\imp{+0.16}}
& \textbf{48.42\imp{+0.90}}
& \textbf{75.87\imp{+1.37}}
& \textbf{84.98\imp{+0.39}}
& \textbf{87.35\imp{+0.18}}
& \textbf{75.00\imp{+1.02}}
& \textbf{78.03\imp{+0.99}}
& \textbf{81.61\imp{+0.36}}
& \textbf{78.01\imp{+0.61}} \\

APE
& 97.85 & 69.35 & 79.98 & 41.27 & 90.60 & 74.27 & 93.48 & 78.44 & 83.82 & 85.31 & 79.44 \\
\rowcolor{gray!15}
w/ \ours
& \textbf{98.19\imp{+0.34}}
& \textbf{78.44\imp{+9.09}}
& \textbf{94.18\imp{+14.20}}
& \textbf{53.83\imp{+12.56}}
& \textbf{92.09\imp{+1.49}}
& \textbf{94.33\imp{+20.06}}
& \textbf{96.79\imp{+3.30}}
& \textbf{84.53\imp{+6.09}}
& \textbf{85.40\imp{+1.58}}
& \textbf{88.78\imp{+3.47}}
& \textbf{86.66\imp{+7.22}} \\

ProKeR
& 97.32 & 69.23 & 84.86 & 53.60 & 91.24 & 93.19 & 93.31 & 84.85 & 83.60 & 86.19 & 83.74 \\
\rowcolor{gray!15}
w/ \ours
& \textbf{97.89\imp{+0.58}}
& \textbf{79.35\imp{+10.12}}
& \textbf{93.39\imp{+8.53}}
& \textbf{58.12\imp{+4.52}}
& \textbf{92.03\imp{+0.79}}
& \textbf{95.92\imp{+2.74}}
& \textbf{96.77\imp{+3.46}}
& \textbf{86.45\imp{+1.60}}
& \textbf{84.35\imp{+0.75}}
& \textbf{88.98\imp{+2.79}}
& \textbf{87.32\imp{+3.59}} \\

\midrule
\multicolumn{12}{c}{\textbf{4-shot}} \\
\midrule

Tip-Adapter
& 96.53 & 59.42 & 76.16 & 38.54 & 89.55 & 54.28 & 86.68 & 76.93 & 80.96 & 82.39 & 74.14 \\
\rowcolor{gray!15}
w/ \ours
& \textbf{97.74\imp{+1.21}}
& \textbf{78.14\imp{+18.72}}
& \textbf{92.33\imp{+16.17}}
& \textbf{54.49\imp{+15.95}}
& \textbf{91.50\imp{+1.95}}
& \textbf{93.96\imp{+39.68}}
& \textbf{96.94\imp{+10.26}}
& \textbf{86.64\imp{+9.71}}
& \textbf{84.84\imp{+3.88}}
& \textbf{88.65\imp{+6.26}}
& \textbf{86.52\imp{+12.38}} \\

GDA
& 93.63 & 62.28 & 87.78 & 45.50 & 77.19 & 86.02 & 82.54 & 70.23 & 73.77 & 80.41 & 75.93 \\
\rowcolor{gray!15}
w/ \ours
& \textbf{94.76\imp{+1.12}}
& \textbf{67.05\imp{+4.77}}
& \textbf{87.57\wor{-0.21}}
& \textbf{48.73\imp{+3.23}}
& \textbf{80.67\imp{+3.48}}
& \textbf{88.19\imp{+2.17}}
& \textbf{89.86\imp{+7.32}}
& \textbf{72.93\imp{+2.70}}
& \textbf{74.98\imp{+1.21}}
& \textbf{81.52\imp{+1.11}}
& \textbf{78.63\imp{+2.69}} \\

APE
& 97.89 & 68.16 & 87.09 & 48.46 & 94.16 & 82.97 & 96.56 & 84.70 & 83.95 & 85.22 & 82.91 \\
\rowcolor{gray!15}
w/ \ours
& \textbf{97.99\imp{+0.10}}
& \textbf{77.52\imp{+9.36}}
& \textbf{90.61\imp{+3.52}}
& \textbf{55.03\imp{+6.58}}
& \textbf{92.11\wor{-2.05}}
& \textbf{94.58\imp{+11.61}}
& \textbf{96.98\imp{+0.42}}
& \textbf{85.34\imp{+0.65}}
& \textbf{84.32\imp{+0.37}}
& \textbf{88.28\imp{+3.06}}
& \textbf{86.28\imp{+3.36}} \\

ProKeR
& 97.48 & 71.21 & 86.75 & 52.83 & 93.05 & 93.12 & 94.16 & 85.29 & 85.07 & 87.42 & 84.64 \\
\rowcolor{gray!15}
w/ \ours
& \textbf{97.77\imp{+0.29}}
& \textbf{77.40\imp{+6.19}}
& \textbf{90.49\imp{+3.74}}
& \textbf{56.19\imp{+3.36}}
& \textbf{91.59\wor{-1.46}}
& \textbf{95.33\imp{+2.21}}
& \textbf{95.93\imp{+1.76}}
& \textbf{85.32\imp{+0.03}}
& \textbf{84.22\wor{-0.85}}
& \textbf{88.81\imp{+1.39}}
& \textbf{86.30\imp{+1.66}} \\

\midrule
\multicolumn{12}{c}{\textbf{2-shot}} \\
\midrule

Tip-Adapter
& 97.69 & 68.71 & 86.63 & 50.05 & 93.83 & 80.18 & 94.70 & 84.61 & 84.44 & 86.07 & 82.69 \\
\rowcolor{gray!15}
w/ \ours
& \textbf{97.80\imp{+0.10}}
& \textbf{77.08\imp{+8.37}}
& \textbf{91.99\imp{+5.36}}
& \textbf{54.46\imp{+4.41}}
& \textbf{91.38\wor{-2.45}}
& \textbf{94.25\imp{+14.07}}
& \textbf{96.00\imp{+1.30}}
& \textbf{85.77\imp{+1.16}}
& \textbf{84.44\imp{+0.00}}
& \textbf{88.19\imp{+2.12}}
& \textbf{86.14\imp{+3.44}} \\

GDA
& 92.69 & 60.92 & 86.95 & 43.99 & 77.02 & 86.46 & 78.12 & 66.25 & 72.07 & 77.61 & 74.21 \\
\rowcolor{gray!15}
w/ \ours
& \textbf{94.09\imp{+1.40}}
& \textbf{65.72\imp{+4.80}}
& \textbf{85.81\wor{-1.14}}
& \textbf{46.62\imp{+2.63}}
& \textbf{79.81\imp{+2.79}}
& \textbf{87.25\imp{+0.79}}
& \textbf{88.57\imp{+10.44}}
& \textbf{71.01\imp{+4.76}}
& \textbf{73.44\imp{+1.38}}
& \textbf{79.30\imp{+1.69}}
& \textbf{77.16\imp{+2.95}} \\

APE
& 97.84 & 67.39 & 86.62 & 53.32 & 94.58 & 84.63 & 97.00 & 87.49 & 84.67 & 85.55 & 83.91 \\
\rowcolor{gray!15}
w/ \ours
& \textbf{97.64\wor{-0.20}}
& \textbf{77.44\imp{+10.05}}
& \textbf{92.09\imp{+5.48}}
& \textbf{54.91\imp{+1.59}}
& \textbf{90.95\wor{-3.63}}
& \textbf{94.00\imp{+9.37}}
& \textbf{96.03\wor{-0.97}}
& \textbf{85.77\wor{-1.72}}
& \textbf{84.32\wor{-0.35}}
& \textbf{88.68\imp{+3.13}}
& \textbf{86.18\imp{+2.27}} \\

ProKeR
& 97.77 & 73.08 & 89.27 & 52.77 & 94.10 & 93.12 & 95.11 & 86.03 & 85.69 & 87.79 & 85.47 \\
\rowcolor{gray!15}
w/ \ours
& \textbf{97.72\wor{-0.05}}
& \textbf{76.64\imp{+3.56}}
& \textbf{92.69\imp{+3.42}}
& \textbf{54.13\imp{+1.36}}
& \textbf{91.56\wor{-2.54}}
& \textbf{94.77\imp{+1.65}}
& \textbf{95.43\imp{+0.32}}
& \textbf{84.56\wor{-1.48}}
& \textbf{83.77\wor{-1.92}}
& \textbf{87.80\imp{+0.00}}
& \textbf{85.91\imp{+0.43}} \\

\bottomrule
\end{tabular}%
}
\caption{\textbf{Severe imbalance with low effective classes.} We report Top-1 accuracy (\%) across 10 datasets. Improvements over base methods are shown as subscripts; positive gains are highlighted in \color{Better}{green}.}
\label{tab:severe_low_eff}
\end{table*}

%% file: Algorithms/algorithm1.tex
\begin{algorithm}
\RestyleAlgo{ruled}
 iter$_{\mathbf{w}}$ = 1; iter$_{\boldsymbol{\phi}}$ = 3; $\beta = 0.01$;
 $\gamma = 1$;
 $\delta = 0.05$; $j = 3$\\
Initialize ${\mathbf w}^{0,0} = \mathbf{a}$ {\color{blue}\tcp{\small Initial prototypes from the base method}}
Initialize $\boldsymbol{\phi}^{0,0}$  {\color{blue}\tcp{\small Pretrained vision encoder with LoRA parameters}} 
\For{$j = 0, 1, \ldots$}
{ \For{$l_{1} = 0, 1, \ldots,$ {\em iter}$_{\mathbf{w}}$}
{
$\mathbf{w}^{j,l_{1}+1} = \mathbf{w}^{j,l_{1}} - \frac{1}{\rho} \nabla L_{\mathbf{w}}(\mathbf{w}^{j,l_{1}},\boldsymbol{\phi}^{j,0})$ \\ {\color{blue}\tcp{\small $1/\rho$, from Eq.(\ref{Eq: Lp constant prototypes})}}} 
$\mathbf{w}^{j+1,0} = \mathbf{w}^{j,\mbox{{\small iter}}_{\mathbf{w}}}$
\\ 
\For{$l_{2} = 0, 1, \ldots,$ {\em iter}$_{\boldsymbol{\phi}}$}
{
$\boldsymbol{\phi}^{j,l_{2}+1} = \boldsymbol{\phi}^{j,l_{2}} - {\boldsymbol{\eta}_{\boldsymbol{\phi}}} \nabla L_{\boldsymbol{\phi}} (\mathbf{w}^{j+1,0},\boldsymbol{\phi}^{j,l_{2}})$\\
{\color{blue}\tcp{\small $\boldsymbol{\eta}_{\boldsymbol{\phi}}$, Standard learning rate for LoRA-based encoder adaptation}}
}$\boldsymbol{\phi}^{j+1,0} = \boldsymbol{\phi}^{j,\mbox{{\small iter}}_{\boldsymbol{\phi}}}$
\\ }
\caption{Block coordinate MM (${\mathbf w}$, $\boldsymbol{\phi}$) \label{Algorithm-MM-w-alpha}}
\end{algorithm}

%% file: main.bib
@String(IJCV = {Int. J. Comput. Vis.})

@String(CVPR= {IEEE Conf. Comput. Vis. Pattern Recog.})

@String(ECCV= {Eur. Conf. Comput. Vis.})

@String(ICLR = {Int. Conf. Learn. Represent.})

@String(IJCV  = {IJCV})

@String(CVPR  = {CVPR})

@String(ECCV  = {ECCV})

@String(ICLR  = {ICLR})

@inproceedings{radford2021clip,
  title        = {{Learning Transferable Visual Models From Natural Language Supervision}},
  author       = {Radford, Alec and Kim, Jong Wook and Hallacy, Chris and Ramesh, Aditya and Goh, Gabriel and Agarwal, Sandhini and Sastry, Girish and Askell, Amanda and Mishkin, Pamela and Clark, Jack and Krueger, Gretchen and Sutskever, Ilya},
  booktitle    = {Proceedings of the 38th International Conference on Machine Learning (ICML)},
  year         = {2021},
  url          = {https://arxiv.org/abs/2103.00020}
}

@inproceedings{jia2021align,
  title     = {{Scaling Up Visual and Vision-Language Representation Learning With Noisy Text Supervision}},
  author    = {Jia, Chao and Yang, Yinfei and Xia, Ye and Chen, Yi-Ting and Parekh, Zarana and Pham, Hieu and Le, Quoc V. and Sung, Yunhsuan and Li, Zhen and Duerig, Tom},
  booktitle = {Proceedings of the International Conference on Machine Learning (ICML)},
  year      = {2021},
  url       = {https://arxiv.org/abs/2102.05918}
}

@article{zhou2022coop,
  title   = {{Learning to Prompt for Vision-Language Models}},
  author  = {Zhou, Kaiyang and Yang, Jingkang and Loy, Chen Change and Liu, Ziwei},
  journal = {IJCV},
  volume  = {130},
  number  = {9},
  pages   = {2337--2348},
  year    = {2022},
  doi     = {10.1007/s11263-022-01653-1},
  url     = {https://link.springer.com/article/10.1007/s11263-022-01653-1}
}

@inproceedings{zhou2022cocoop,
  title        = {{Conditional Prompt Learning for Vision-Language Models}},
  author       = {Zhou, Kaiyang and Yang, Jingkang and Loy, Chen Change and Liu, Ziwei},
  booktitle    = {Proceedings of the IEEE/CVF Conference on Computer Vision and Pattern Recognition (CVPR)},
  year         = {2022},
  url          = {https://arxiv.org/abs/2203.05557}
}

@inproceedings{clap24,
  title        = {{A Closer Look at the Few-Shot Adaptation of Large Vision-Language Models}},
  author       = {Silva-Rodr\'iguez, Julio and Hajimiri, Sina and Ben Ayed, Ismail and Dolz, Jose},
  booktitle    = {Proceedings of the IEEE/CVF Conference on Computer Vision and Pattern Recognition (CVPR)},
  year         = {2024}
}

@article{xu2023metaclip,
  title        = {{Demystifying CLIP Data}},
  author       = {Xu, Hu and Xie, Saining and Tan, Xiaoqing Ellen and Huang, Po-Yao and Howes, Russell and Sharma, Vasu and Li, Shang-Wen and Ghosh, Gargi and Zettlemoyer, Luke and Feichtenhofer, Christoph},
  journal      = {arXiv preprint arXiv:2309.16671},
  year         = {2023},
  note         = {Introduces MetaCLIP dataset and scaling insights},
  url          = {https://arxiv.org/abs/2309.16671}
}

@article{zanella2024boosting,
  title={{Boosting Vision-Language Models with Transduction}},
  author={Zanella, Maxime and G{\'e}rin, Beno{\^\i}t and Ayed, Ismail},
  journal={Advances in Neural Information Processing Systems},
  volume={37},
  pages={62223--62256},
  year={2024}
}

@inproceedings{zhang2022tipadapter,
  title={{Tip-adapter: Training-free adaption of clip for few-shot classification}},
  author={Zhang, Renrui and Zhang, Wei and Fang, Rongyao and Gao, Peng and Li, Kunchang and Dai, Jifeng and Qiao, Yu and Li, Hongsheng},
  booktitle={European conference on computer vision},
  pages={493--510},
  year={2022},
  organization={Springer}
}

@article{gao2024clip,
  title={Clip-adapter: Better vision-language models with feature adapters},
  author={Gao, Peng and Geng, Shijie and Zhang, Renrui and Ma, Teli and Fang, Rongyao and Zhang, Yongfeng and Li, Hongsheng and Qiao, Yu},
  journal={International journal of computer vision},
  volume={132},
  number={2},
  pages={581--595},
  year={2024},
  publisher={Springer}
}

@inproceedings{huang2024lp,
  title={Lp++: A surprisingly strong linear probe for few-shot clip},
  author={Huang, Yunshi and Shakeri, Fereshteh and Dolz, Jose and Boudiaf, Malik and Bahig, Houda and Ben Ayed, Ismail},
  booktitle={Proceedings of the IEEE/CVF Conference on Computer Vision and Pattern Recognition},
  pages={23773--23782},
  year={2024}
}

@inproceedings{lin2023multimodality,
  title={{Multimodality helps unimodality: Cross-modal few-shot learning with multimodal models}},
  author={Lin, Zhiqiu and Yu, Samuel and Kuang, Zhiyi and Pathak, Deepak and Ramanan, Deva},
  booktitle={Proceedings of the IEEE/CVF conference on computer vision and pattern recognition},
  pages={19325--19337},
  year={2023}
}

@inproceedings{lu2022prompt,
  title={{Prompt Distribution Learning}},
  author={Lu, Yuning and Liu, Jianzhuang and Zhang, Yonggang and Liu, Yajing and Tian, Xinmei},
  booktitle={Proceedings of the IEEE/CVF conference on computer vision and pattern recognition},
  pages={5206--5215},
  year={2022}
}

@inproceedings{parkhi2012pets,
  title     = {{Cats and Dogs}},
  author    = {Parkhi, Omkar M. and Vedaldi, Andrea and Zisserman, Andrew and Jawahar, C. V.},
  booktitle = {Proceedings of the IEEE Conference on Computer Vision and Pattern Recognition (CVPR)},
  year      = {2012},
  url       = {https://www.robots.ox.ac.uk/~vedaldi/assets/pubs/parkhi12cat.pdf}
}

@inproceedings{xiao2010sun397,
  title        = {{SUN Database: Large-scale Scene Recognition from Abbey to Zoo}},
  author       = {Xiao, Jianxiong and Hays, James and Ehinger, Krista and Oliva, Aude and Torralba, Antonio},
  booktitle    = {IEEE Conference on Computer Vision and Pattern Recognition (CVPR)},
  year         = {2010}
}

@article{maji2013fgvcaircraft,
  title={{Fine-grained Visual Classification of Aircraft}},
  author={Maji, Subhransu and Rahtu, Esa and Kannala, Juho and Blaschko, Matthew and Vedaldi, Andrea},
  journal={arXiv preprint arXiv:1306.5151},
  year={2013}
}

@inproceedings{cimpoi2014dtd,
  title        = {{Describing Textures in the Wild}},
  author       = {Cimpoi, Mircea and Maji, Subhransu and Kokkinos, Iasonas and Mohamed, Sammy and Vedaldi, Andrea},
  booktitle    = {IEEE Conference on Computer Vision and Pattern Recognition (CVPR)},
  year         = {2014}
}

@article{helber2019eurosat,
  title={{EuroSAT: A Novel Dataset and Deep Learning Benchmark for Land Use and Land Cover Classification}},
  author={Helber, Patrick and Bischke, Benjamin and Dengel, Andreas and Borth, Damian},
  journal={IEEE Journal of Selected Topics in Applied Earth Observations and Remote Sensing},
  volume={12},
  number={7},
  pages={2217--2226},
  year={2019},
  publisher={IEEE}
}

@inproceedings{krause2013stanfordcars,
  title        = {{3D Object Representations for Fine-Grained Categorization}},
  author       = {Krause, Jonathan and Stark, Michael and Deng, Jia and Fei-Fei, Li},
  booktitle    = {IEEE International Conference on Computer Vision Workshops (ICCVW)},
  year         = {2013}
}

@inproceedings{bossard2014food101,
  title        = {{Food-101 — Mining Discriminative Components with Random Forests}},
  author       = {Bossard, Lukas and Guillaumin, Matthieu and Van Gool, Luc},
  booktitle    = {European Conference on Computer Vision (ECCV)},
  year         = {2014}
}

@inproceedings{nilsback2008flowers102,
  title        = {{Automated Flower Classification over a Large Number of Classes}},
  author       = {Nilsback, Maria-Elena and Zisserman, Andrew},
  booktitle    = {Proceedings of the Indian Conference on Computer Vision, Graphics and Image Processing (ICVGIP)},
  year         = {2008}
}

@inproceedings{fei2004caltech101,
  title     = {{Learning Generative Visual Models from Few Training Examples: An Incremental Bayesian Approach Tested on 101 Object Categories}},
  author    = {Fei-Fei, Li and Fergus, Rob and Perona, Pietro},
  booktitle = {Proceedings of the IEEE Conference on Computer Vision and Pattern Recognition (CVPR) Workshops},
  year      = {2004}
}

@article{soomro2012ucf101,
  title   = {{UCF101: A Dataset of 101 Human Actions Classes from Videos in the Wild}},
  author  = {Soomro, Khurram and Zamir, Amir Roshan and Shah, Mubarak},
  journal = {arXiv preprint arXiv:1212.0402},
  year    = {2012},
  url     = {https://arxiv.org/abs/1212.0402}
}

@inproceedings{deng2009imagenet,
  title        = {{ImageNet: A Large-Scale Hierarchical Image Database}},
  author       = {Deng, Jia and Dong, Wei and Socher, Richard and Li, Li-Jia and Li, Kai and Fei-Fei, Li},
  booktitle    = {IEEE Conference on Computer Vision and Pattern Recognition (CVPR)},
  year         = {2009}
}

@inproceedings{hu2022lora,
  title   = {{LoRA: Low-Rank Adaptation of Large Language Models}},
  author    = {Hu, Edward J. and Shen, Yelong and Wallis, Phillip and Allen-Zhu, Zeyuan and Li, Yuanzhi and Wang, Shean and Wang, Lu and Chen, Weizhu},
  booktitle = {International Conference on Learning Representations (ICLR)},
  year      = {2022},
  url       = {https://openreview.net/forum?id=nZeVKeeFYf9}
}

@inproceedings{wang2024baseline,
  title={{A Hard-to-Beat Baseline for Training-free CLIP-based Adaptation}},
  author={Wang, Zhengbo and Liang, Jian and Sheng, Lijun and He, Ran and Wang, Zilei and Tan, Tieniu},
  booktitle={The Twelfth International Conference on Learning Representations (ICLR)},
  year={2024}
}

@inproceedings{zhu2023not,
  title={{Not All Features Matter: Enhancing Few-shot Clip with Adaptive Prior Refinement}},
  author={Zhu, Xiangyang and Zhang, Renrui and He, Bowei and Zhou, Aojun and Wang, Dong and Zhao, Bin and Gao, Peng},
  booktitle={Proceedings of the IEEE/CVF international conference on computer vision},
  pages={2605--2615},
  year={2023}
}

@inproceedings{bendou2025proker,
  title={{Proker: A Kernel Perspective on Few-shot Adaptation of Large Vision-Language Models}},
  author={Bendou, Yassir and Ouasfi, Amine and Gripon, Vincent and Boukhayma, Adnane},
  booktitle={Proceedings of the IEEE/CVF Conference on Computer Vision and Pattern Recognition},
  pages={25092--25102},
  year={2025}
}

@article{dosovitskiy2020image,
  title={An image is worth 16x16 words: Transformers for image recognition at scale},
  author={Dosovitskiy, Alexey and Beyer, Lucas and Kolesnikov, Alexander and Weissenborn, Dirk and Zhai, Xiaohua and Unterthiner, Thomas and Dehghani, Mostafa and Minderer, Matthias and Heigold, Georg and Gelly, Sylvain and others},
  journal={arXiv preprint arXiv:2010.11929},
  year={2020}
}

@article{boudiaf2020information,
  title={Information maximization for few-shot learning},
  author={Boudiaf, Malik and Ziko, Imtiaz and Rony, J{\'e}r{\^o}me and Dolz, Jos{\'e} and Piantanida, Pablo and Ben Ayed, Ismail},
  journal={Advances in Neural Information Processing Systems},
  volume={33},
  pages={2445--2457},
  year={2020}
}

@inproceedings{yu2023task,
  title={Task residual for tuning vision-language models},
  author={Yu, Tao and Lu, Zhihe and Jin, Xin and Chen, Zhibo and Wang, Xinchao},
  booktitle={Proceedings of the IEEE/CVF conference on computer vision and pattern recognition},
  pages={10899--10909},
  year={2023}
}

@article{lange2000optimization,
  title={Optimization transfer using surrogate objective functions},
  author={Lange, Kenneth and Hunter, David R and Yang, Ilsoon},
  journal={Journal of computational and graphical statistics},
  volume={9},
  number={1},
  pages={1--20},
  year={2000},
  publisher={Taylor \& Francis}
}
